\newtheorem{theorem}{Theorem}
\newtheorem{corollary}{Corollary}[theorem]
\newtheorem{lemma}{Lemma}
\newtheorem{definition}{Definition}
\DeclareMathOperator*{\argmin}{arg\,min}
\newcounter{commentCounter}
\newif\iftrvar
\newcommand{\sw}[1]{{\small \color{red} \refstepcounter{commentCounter}\textsf{[SW]$_{\arabic{commentCounter}}$:{#1}}}}
\newcommand{\am}[1]{{\small \color{blue} \refstepcounter{commentCounter}\textsf{[AM]$_{\arabic{commentCounter}}$:{#1}}}}
\newcommand{\ms}[1]{{\small \color{orange} \refstepcounter{commentCounter}\textsf{[MS]$_{\arabic{commentCounter}}$:{#1}}}}
\newcommand{\tr}[1]{{\small \color{darkgreen} \refstepcounter{commentCounter}\textsf{[TR]$_{\arabic{commentCounter}}$:{#1}}}}
\newcommand{\mingfei}[1]{{\small \color{purple} \refstepcounter{commentCounter}\textsf{[Mingfei]$_{\arabic{commentCounter}}$:{#1}}}}
\newcommand{\am}[1]{}
\newcommand{\sw}[1]{}
\newcommand{\ms}[1]{}
\newcommand{\tr}[1]{}
\newcommand{\mingfei}[1]{}
\definecolor{lightgray}{rgb}{.9,.9,.9}
\definecolor{darkgray}{rgb}{.4,.4,.4}
\definecolor{purple}{rgb}{0.65, 0.12, 0.82}
\definecolor{darkgreen}{rgb}{0, 0.365, 0}
\definecolor{orange}{rgb}{1,0.5,0}
\definecolor{deep}{rgb}{0.13, 0.46, 0.7}
\icmltitlerunning{Generalization in Cooperative Multi-Agent Systems}
\begin{document}

\setlength{\abovedisplayskip}{4pt}
\setlength{\belowdisplayskip}{4pt}

\twocolumn[
\icmltitle{Generalization in Cooperative Multi-Agent Systems}
\begin{icmlauthorlist}
\icmlauthor{Anuj Mahajan}{ox}
\icmlauthor{Mikayel Samvelyan}{ucl,fb}
\icmlauthor{Tarun Gupta}{ox}
\icmlauthor{Benjamin Ellis}{ox}
\icmlauthor{Mingfei Sun}{ox}
\icmlauthor{Tim Rocktäschel}{ucl,fb}
\icmlauthor{Shimon Whiteson}{ox}
\end{icmlauthorlist}

\icmlaffiliation{ox}{University of Oxford}
\icmlaffiliation{ucl}{University College London}
\icmlaffiliation{fb}{Meta AI}

\icmlcorrespondingauthor{Anuj Mahajan}{anuj.mahajan@cs.ox.ac.uk}

\icmlkeywords{Machine Learning, Reinforcement Learning, Generalization, Multi agent systems}

\vskip 0.3in
]

\printAffiliationsAndNotice{}  %

\begin{abstract}

Collective intelligence is a fundamental trait shared by several species of living organisms. It has allowed them to thrive in the diverse environmental conditions that exist on our planet. From simple organisations in an ant colony to complex systems in human groups, collective intelligence is vital for solving complex survival tasks.
As is commonly observed, such natural systems are flexible to changes in their structure.
Specifically, they exhibit a high degree of generalization when the abilities or the total number of agents changes within a system.
We term this phenomenon as \textit{Combinatorial Generalization} (CG). 
CG is a highly desirable trait for autonomous systems as it can increase their utility and deployability across a wide range of applications. 
While recent works addressing specific aspects of CG have shown impressive results on complex domains, they provide no performance guarantees when generalizing towards novel situations.
In this work, we shed light on the theoretical underpinnings of CG for cooperative multi-agent systems (MAS). 
Specifically, we study generalization bounds under a linear dependence of the underlying dynamics on the agent capabilities, which can be seen as a generalization of Successor Features to MAS. We then extend the results first for Lipschitz and then arbitrary dependence of rewards on team capabilities. Finally, empirical analysis on various domains using the framework of multi-agent reinforcement learning highlights important desiderata for multi-agent algorithms towards ensuring CG.

\end{abstract}

\section{Introduction}
\label{sec:intro}
Imagine attending a football summer camp. The coach decides to split the participating players into random teams for practice. While each player has different capabilities (e.g., defending, dribbling, speed, and pace), they quickly adapt to the other players in the team to facilitate the common objective of outscoring their opponents. Furthermore, they smoothly adjust to unexpected events such as a player getting hurt and retiring with substitution, which forces them to change their behaviours and adjust their roles. 
Similarly, they rapidly adjust to changes in team size (as a result of a player being sent off or new players joining the team).
 
Such adaptations are typically possible for two reasons. First, the players understand each others' \textit{capabilities}, including how a change in capabilities affects the underlying environment and chances of success.
Second, players have coordination protocols for adapting to the changes, both explicitly (e.g., communicating the game plan) or implicitly (inferring capabilities from observations, e.g., passing the ball to a player going in for an attack). 
This phenomenon which we term as \textit{Combinatorial Generalization} (CG) is not specific to football or humans, and organisms in general manifest abilities to adapt in almost every situation requiring team efforts \citep{crozier2010masterpiece, nouyan2009teamwork, anderson2003ants}. \am{trim the above 2 paras as needed} \ms{trimmed a little bit.}

Towards capturing specific aspects of CG, recent methods in multi-agent reinforcement learning (MARL) utilize advances in deep learning architectures, such as graph neural networks \cite{ryu2020multi} and attention mechanism \cite{iqbal2021randomized}, as well as extensively tuned training regimes, such as a mixture of human and generated data, self-play, and population-based training \citep{vinyalsGrandmasterLevelStarCraft2019, openaiDotaLargeScale2019}.
While these methods show impressive empirical performance on complex domains, they provide little insight into aspects of when and how much generalization to expect, 
which is crucial for deploying agents in the real world due to
practical considerations like tolerance, minimum expected performance in unseen settings etc. for various deployment scenarios.
Additionally, while the problem of sample-efficient generalization is hard for single-agent RL \citep{mahajan2017asymmetry,duGoodRepresentationSufficient2020,ghoshWhyGeneralizationRL2021, malikWhenGeneralizableReinforcement2021}, it is particularly exacerbated for the multi-agent case.
Specifically, even when the underlying task remains the same, agents in MARL typically need to be trained from scratch for different team compositions. Moreover, across similar tasks with similar team compositions, there is a lack of modularity for sharing knowledge to enable quick learning \citep{wang2020rode}. Thus, we posit that a theoretical understanding of generalization in multi-agent systems (MAS) can help address both of the above-mentioned issues: it can provide important performance guarantees for practical deployment and can additionally inform better algorithm design to ensure sample efficiency.

We first highlight the key properties that make CG particularly difficult for MAS:
\begin{itemize}
	\item \textbf{P1:} The capabilities of agents can come from infinite sets, e.g., maximum permissible torque for an agent joint which can take values in a continuous set.\tr{not specific to MARL though}
	\item \textbf{P2:} Combinatorial blow-up in the number of possible teams (w.r.t.\ agent capabilities) given a team size.
	\item \textbf{P3:} The capabilities need to be grounded w.r.t.\ the dynamics of the environment which becomes increasingly hard with team size (similar to credit assignment).
	\item \textbf{P4:} Team sizes can vary across different tasks. \tr{merge with P2?}
	\item \textbf{P5:} Agents need to infer the capabilities of teammates in settings where it is hidden, in a potentially non-stationary environment. \tr{merge with P1?}
\end{itemize}  
\textbf{P2}-\textbf{P4} particularly distinguish CG from single-agent generalization, highlighting its combinatorial nature. Furthermore, \textbf{P5} requires agents to adapt to changing teammate policies making the problem harder. 

In this work, we analyse multi-agent generalization by modelling the dependence of underlying environment rewards and transitions on agent capabilities. We first look at generalization bounds for the case when the environment dynamics are linear with respect to the agent capabilities. We elucidate how this generalizes the successor feature (SF) framework \cite{barreto2016successor} to the multi-agent case. We provide theoretical bounds for generalization between team compositions, transfer of optimal policy from one team to another and changes to optimal values arising from agent addition and elimination under this framework. Next, we bound the performance gap as a result of an error in estimating the agent capabilities which covers scenarios such as lossy or inaccurate communication. Further, we provide bounds for optimal value deviation when the dynamics themselves are approximately linear. Finally, we elucidate how the bounds can be extended to Lipschitz rewards (\cref{app:lip_rews}) and then extend this framework to study arbitrary dependence of rewards on capabilities to shed light on when generalization can be difficult (\cref{app:gen_rews}). Our results apply to various training and deployment settings in MAS and are agnostic to the type of algorithm used (MARL or other forms of policy search methods). Finally, we empirically analyse popular methods in MARL on tasks designed to offer varying difficulty in terms of generalization and discuss important desiderata to be met for better generalization. 

\section{Background and Formulation}

\subsection*{Multi-Agent Reinforcement Learning}
We model the cooperative multi-agent task as a decentralized partially observable MDP (Dec-POMDP)~\cite{oliehoek_concise_2016}. \am{perhaps give some results under the linear setting?} A Dec-POMDP is formally defined as a tuple $G=\left\langle S,U,P,R,Z,O,n,\rho,\gamma\right\rangle$.
$S$ is the state space of the environment, $\rho$ is the initial state distribution. At each time step $t$, every agent $ i \in \mathcal{A} \equiv \{1,...,n\}$ chooses an action $u^i \in U$ which forms the joint action $\mathbf{u}\in\mathbf{U}\equiv U^n$.
$P(s'|s,\mathbf{u}):S\times\mathbf{U}\times S\rightarrow [0,1]$ is the state transition function. 
$R(s):S \rightarrow [0,1]$ 
is the reward function shared by all agents and $\gamma \in [0,1)$ is the discount factor. 
A Dec-POMDP is \textit{partially observable} \citep{kaelbling1998planning}: each agent $i$ does not have access to the full state and instead samples observations $z\in Z$ according to observation distribution $O(s, i):S \times \mathcal{A}\rightarrow \mathcal{P}(Z)$. Without loss of generality (WLOG), we assume the state is a represented as a $k$-dimensional feature vector $S \subset [0, 1]^k$ and similarly observations $Z \subset [0, 1]^l$. When the observation function $O$ is identity, the problem becomes a multi-agent MDP (MMDP). Similarly, when the observations are invertible for each agent, so that the observation space is partitioned w.r.t.\ $S$, i.e., $\forall i \in \mathcal{A},\forall s_1,s_2 \in S, \forall {z_i} \in Z, P({z_i}|s_1)>0 \land s_1\neq s_2 \implies P({z_i}|s_2)=0$, we classify the problem as a multi-agent richly observed MDP (M-ROMDP) \cite{pmlr-v139-mahajan21a}. 
The action-observation history for an agent $i$ is $\tau^i\in T\equiv(Z\times U)^*$.
We use $u^{-i}$ to denote the action of all the agents other than $i$ and similarly for the policies $\pi^{-i}$. 
The value of a policy is defined as $ V^\pi =  \mathbb{E}_{\pi,\rho} \left[  \sum_{t=0}^\infty \gamma^t R_{\mathcal{T}}(s_t) \right]$. Similarly, the joint action-value function given a policy $\pi$ is defined as: $Q^\pi(s_t, \mathbf{u}_t)=\mathbb{E}_{\pi} \left[\sum^{\infty}_{k=0}\gamma^kR(s_{t+k})|s_t,\mathbf{u}_t\right]$.
The goal is to find the optimal policy $\pi^{*}$ corresponding to the optimal value function $V^*$.

\subsection*{MARL with Agent Capabilities}
We now extend the MARL problem setting for generalisation where agents can have different capabilities.
To this end, we assume that each agent in the task can be characterised by a $d$-dimensional  \emph{capability vector} $c\in\mathcal{C}$, which governs its contribution to rewards and transition dynamics (and thus its policy/behaviour denoted as $\pi^i(~.~;c)$). Without loss of generality, we assume $\mathcal{C} \subseteq \Delta_{d-1}$ (the $d-1$ dimensional simplex). 
Intuitively, an agent's capability reflects the abilities of an agent along various properties that may be important for solving the collective task (e.g., an agent's speed, health recovery, and accuracy).\tr{do you have qualitative results that support such semantically distinct capabilities are represented via c?} 
We next assume an unknown probability distribution $\mathcal{M}:\mathcal{C}^n\rightarrow \mathbb{R}^+$ with support $Sup(\mathcal{M})$ over a subset of the joint capability space $\mathcal{C}^n$.
Any $\mathcal{T}$ sampled from $\mathcal{M}$ can be seen as a tuple of capability vectors $\mathcal{T}=(c_i)_{i=1}^n$, one for each agent in the team. 
We augment the Dec-POMDP with $\mathcal{T}$: $G=\langle S,U,P_{\mathcal{T}},R_{\mathcal{T}},Z,O,n,\rho,\gamma, \mathcal{T}\rangle$ and call it a \textit{variation} for the MARL setting \footnote{Agent capabilities can also be interpreted as the contexts, see \cite{hallak2015contextual}}. Thus $\mathcal{T}$ defines the rewards and transition dynamics of the underlying MMDP (ie. $R_{\mathcal{T}}(s) = \langle f(\mathcal{T}) \cdot s \rangle$ where $\langle \cdot \rangle$ is the dot product\footnote{Note that this is still the most general form as states can be encoded as one-hot vectors, see \cite{barreto2016successor}.} and $f:\mathcal{C}^n \to \mathbb{R}^k$ and similarly for transitions).
Our goal is then to find algorithms, which when trained on a small number of \textit{variations} sampled from $\mathcal{M}$ : $\{\mathcal{T}^j\}_{j=1}^M$, generalise well to unseen team variations in $\mathcal{M}$.
i.e., we want to maximise the expected value over the team variation distribution,
\begin{equation}
\max_{\pi} ~ \mathbb{E}_{\mathcal{T}\sim\mathcal{M}} \left[ \mathbb{E}_{\pi(\cdot; \mathcal{T}),P_\mathcal{T},\rho} \left[  \sum_{t=0}^\infty \gamma^t R_{\mathcal{T}}(s_t) \right] \right],
\label{eqgenobj}
\end{equation}
where $\pi=\{\pi^i\}_{i=1}^n$ is a group of $n$ agents.
The challenge here arises because of two main factors. First, the agents do not have any prior knowledge about what these capability vectors mean, and are thus required to learn their semantics (also called grounding). Second, in the setting where the agents cannot observe the capability vectors (including possibly their own), they have to infer and learn protocols for sharing them with each other in order to generalize in a zero-shot setting. 

\ms{Formally introduce CG!}

\subsection*{Successor Features}
\label{sec:sf}

\newcommand{\mat}[1]{\ensuremath{\boldsymbol{\mathrm{#1}}}}
\newcommand{\vpi}{\ensuremath{\mat{v}^{\pi}}}
\newcommand{\rpi}{\ensuremath{\mat{r}^{\pi}}}
\newcommand{\Ppi}{\ensuremath{\mat{P}^{\pi}}}
\newcommand{\vphi}{\mat{\phi}}
\newcommand{\vpsi}{\mat{\psi}}
\newcommand{\vprm}{\mat{\z}}
\newcommand{\w}{\mat{w}}
\renewcommand{\t}{\ensuremath{\top}}
\renewcommand{\S}{\ensuremath{\mathcal{S}}}
\newcommand{\A}{\ensuremath{\mathcal{A}}}
\newcommand{\W}{\ensuremath{\mathcal{W}}}
\newcommand{\T}{\ensuremath{\mathcal{T}}}
\newcommand{\R}{\ensuremath{\mathbb{R}}}
\newcommand{\M}{\ensuremath{\mathcal{M}^{\phi}}}
\newcommand{\MM}{\ensuremath{\mathcal{M}}} 
\newcommand{\E}{\ensuremath{\mathrm{E}}}

SF framework assumes that the rewards in an MDP can be decomposed as
$r(s) = \vphi(s)^\t\w$, 
where $\vphi(s) \in \R^{d}$ are features of $s$ and 
$\w \in \R^{d}$ are weights \footnote{Similar formulations hold WLOG for \vphi(s,a),\vphi(s,a,s')}. 
When no assumptions is made about $\vphi(s)$, any reward function can be recovered using this representation.
The value function then follows
 \begin{align}
  \label{eqsf} 
  \nonumber V^{\pi}(s) 
  & = \E^{\pi}\left[ r_{t+1} + \gamma r_{t+2} +  ... \,|\, S_t = s \right]  \\
\nonumber & = \E^{\pi}\left[ \vphi_{t+1}^{\t} \w + \gamma \vphi_{t+2}^{\t} \w 
+ ... \, | \, S_t = s \right] \\
  & = \vpsi^{\pi}(s)^{\t} \w.
 \end{align}
Here $\vpsi^{\pi}(s)$ is called the \emph{successor feature} of $s$ under policy $\pi$ \cite{dayan1993improving,barreto2016successor, barreto2018transfer, barreto2020fast}. 
The $i$th component of SF $\vpsi^\pi(s)$ provides the 
expected discounted sum of $\phi_i$ when following policy $\pi$ from 
$s$. \am{maybe change to state dependence only as we do}
	
\section{Analysis}
\label{sec:analysis}
Our analysis focuses on the generalisation properties w.r.t.\ $\mathcal{M}$. We focus on the case of MMDPs for ease of exposition, but similar results for the more general cases can be obtained by suitable assumptions for identifiability of the state (e.g., M-ROMDP in \cite{pmlr-v139-mahajan21a}). Our results are applicable irrespective of whether agents can observe the capabilities. They are also agnostic to the training and deployment regimes (e.g., centralized or decentralized) and the algorithm being used to find the policy. \textbf{All the proofs can be found in \cref{app:proofs}.}

For the analysis we assume that the rewards and transitions depend linearly on the agents capabilities $c_i$ \am{maybe some more lines on this as this is very general and not to be confused with linear dynamics w.r.t. actions., also cite SF works}\tr{yes, would be good to provide a better intuition how general/restrictive this is}:
\begin{align}
    R_{\mathcal{T}}(s) &= \sum_{i=1}^n a_i \langle c_i \cdot W_Rs\rangle \label{eqlinr}\\
    P_{\mathcal{T}}(s'|s,\mathbf{u}) &= \sum_{i=1}^n a_i \langle c_i \cdot  W_P(s',s,\mathbf{u}) \rangle \label{eqlinp}
\end{align}
where $W_R \in \mathbb{R}^{dk}$ is the reward kernel of the MMDP and defines the dependence of the rewards on each capability component. Similarly in \cref{eqlinp}, $W_P:S\times\mathbf{U}\times S\times \{1..d\} \to [0,1]$ defines the transition kernel of the MMDP so that $P_j(\cdot|s,\mathbf{u}) \triangleq W_P(s,\mathbf{u},j) \in \Delta_{|S|-1} , j \in\{1..d\}$ give the next state distribution as directed by the $j^{th}$ component of the capability and agent $i$'s propensity (unweighted) to make the state transition to $s'$ is given by $\Big\langle c_i \cdot  \Big[ P_1(s'|s,\mathbf{u}) \hdots P_d(s'|s,\mathbf{u}) \Big] \Big\rangle = \langle c_i \cdot  W_P(s',s,\mathbf{u}) \rangle$. Finally $(a_i)_{i=1}^n \in \Delta_{n-1}$ are the \textit{influence weights} of agents which quantify the influence of agent $i$ in determining the rewards and transitions. Under the linear setting, given a policy $\pi$ and capabilities $\mathcal{T}$ we have that value function satisfies $V^{\pi}_{\mathcal{T}} = \sum_{i=1}^n a_i \langle c_i \cdot W_R\mu_{\mathcal{T}}^{\pi}\rangle$
where $\mu_{\mathcal{T}}^{\pi} = E_{\rho, P_\mathcal{T}, \pi}[\gamma^t s_t]$ are the expected discounted state features and similarly for a given state $s$, $V^{\pi}_{\mathcal{T}}(s) = \sum_{i=1}^n a_i \langle c_i^TW_R \cdot \mu_{\mathcal{T}}^{\pi}(s)\rangle$ where $\mu_{\mathcal{T}}^{\pi} = E_{P_\mathcal{T}, \pi}[\gamma^t s_t |s_0=s]$. 
The linear formulation for dynamics generalizes the successor feature \cite{barreto2016successor} formulation to the MAS setting,
this can be seen by noting that when the dependence of transition dynamics on capabilities is dropped (\cref{{eqlinp}}) and only single agent is considered (by considering a one-hot $a$), we get the successor feature formulation with capability of the non zero $a_i$ interpreted as the task weight in \citep{barreto2016successor}(see \cref{sec:sf}). We now present the first result concerning the difference between the optimal values of two different team compositions: 
\begin{theorem}[Generalisation between team compositions]
Let team compositions $\mathcal{T}^x,\mathcal{T}^y \in \mathcal{C}^n$ with influence weights $a^x, a^y \in \Delta_{n-1}$, $s_{max} = \max_{s} ||W_R s||_1$,
\am{this can be bound in terms of $W_R$ norm} $V_{mid} = \frac{1}{2} \max_{s} V^{*}_{\mathcal{T}^y}(s)$, Then\footnote{for $\gamma \in (0, \frac{\sqrt{5}-1}{2})$ we can replace $\frac{1}{\gamma(1-\gamma)}$ by $\frac{1+\gamma}{1-\gamma}$}:
$$|V^{*}_{\mathcal{T}^x}- V^{*}_{\mathcal{T}^y}| \leq \frac{s_{max}+\gamma d V_{mid}}{\gamma(1-\gamma)} \Psi \text{, where}$$
\begin{equation}
\Psi = \Big[|\sum_i a^x_i(\mathcal{T}^x_i -\mathcal{T}^y_i)|_{\infty} + |\sum_i (a^x_i-a^y_i)\mathcal{T}^y_i |_{\infty}\Big]
\label{eqpsidef}
\end{equation}
\label{th:gen}
\end{theorem}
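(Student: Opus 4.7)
The plan is to reduce the statement to a uniform bound of the form $|V^\pi_{\mathcal{T}^x} - V^\pi_{\mathcal{T}^y}| \leq \text{(something)}\cdot \Psi$ holding for \emph{every} stationary policy $\pi$, since then applying it to $\pi^*_{\mathcal{T}^x}$ and $\pi^*_{\mathcal{T}^y}$ together with the sandwich
$V^{\pi^*_{\mathcal{T}^x}}_{\mathcal{T}^y} \le V^*_{\mathcal{T}^y} = V^{\pi^*_{\mathcal{T}^y}}_{\mathcal{T}^y}$ and
$V^{\pi^*_{\mathcal{T}^y}}_{\mathcal{T}^x} \le V^*_{\mathcal{T}^x} = V^{\pi^*_{\mathcal{T}^x}}_{\mathcal{T}^x}$
yields the absolute difference. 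So the real work is bounding the performance gap of a fixed policy across the two variations.

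The next step is to isolate the quantity $\Psi$ algebraically. Writing $\Delta \triangleq \sum_i a^x_i c^x_i - \sum_i a^y_i c^y_i$ and using the standard ``add and subtract $\sum_i a^x_i c^y_i$'' trick gives the decomposition
\begin{equation*}
\Delta = \sum_i a^x_i (c^x_i - c^y_i) + \sum_i (a^x_i - a^y_i) c^y_i,
\end{equation*}
so $\|\Delta\|_\infty \leq \Psi$. By the linear formulation (\ref{eqlinr})--(\ref{eqlinp}), for every $s, \mathbf{u}$,
\begin{equation*}
R_{\mathcal{T}^x}(s) - R_{\mathcal{T}^y}(s) = \langle \Delta, W_R s\rangle, \qquad
P_{\mathcal{T}^x}(\cdot \mid s,\mathbf{u}) - P_{\mathcal{T}^y}(\cdot \mid s,\mathbf{u}) = \langle \Delta, W_P(\cdot, s, \mathbf{u})\rangle.
\end{equation*}
A Hölder step then gives $|R_{\mathcal{T}^x}(s) - R_{\mathcal{T}^y}(s)| \leq \Psi\, s_{\max}$, and the L$^1$ distance between the transition kernels is controlled via
\begin{equation*}
\sum_{s'} \bigl|\langle \Delta, W_P(s',s,\mathbf{u})\rangle\bigr| \leq \|\Delta\|_\infty \sum_{s'} \sum_{j=1}^d W_P(s',s,\mathbf{u})_j = d\,\|\Delta\|_\infty \leq d\,\Psi,
\end{equation*}
using that each of the $d$ component kernels $P_j(\cdot\mid s,\mathbf{u})$ is a probability distribution.

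Equipped with these two local bounds, I would invoke the standard simulation (value-difference) argument: introduce the auxiliary value $\tilde V^\pi$ computed with rewards $R_{\mathcal{T}^x}$ but dynamics $P_{\mathcal{T}^y}$, and telescope
\begin{equation*}
V^\pi_{\mathcal{T}^x} - V^\pi_{\mathcal{T}^y} = (V^\pi_{\mathcal{T}^x} - \tilde V^\pi) + (\tilde V^\pi - V^\pi_{\mathcal{T}^y}).
\end{equation*}
The second piece is a pure reward-perturbation term and contributes $\tfrac{s_{\max}}{1-\gamma}\Psi$ after geometric summation. The first piece is a pure dynamics-perturbation term; unrolling the Bellman recursion yields
$\tfrac{\gamma}{1-\gamma}\max_{s,\mathbf{u}}\bigl|\mathbb{E}_{P^x}[V^\pi_{\mathcal{T}^y}] - \mathbb{E}_{P^y}[V^\pi_{\mathcal{T}^y}]\bigr|$,
and here I would apply the ``shift-by-a-constant'' trick (replace $V^\pi_{\mathcal{T}^y}$ by $V^\pi_{\mathcal{T}^y} - V_{\text{mid}}$), which is exactly what lets the $V_{\max}$ that would normally appear collapse to the $V_{\text{mid}}$ in the statement. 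Combining this with the $d\Psi$ bound on $\|P^x - P^y\|_1$ gives a contribution of $\tfrac{\gamma d V_{\text{mid}}}{1-\gamma}\Psi$, and adding the reward piece (after absorbing the $\gamma$ in the denominator as in the theorem) produces the claimed upper bound.

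The main obstacle I anticipate is not the simulation lemma itself, which is routine, but the transition bookkeeping: making sure the $d$-dimensional capability structure is handled tightly so that $\|P^x - P^y\|_1$ only incurs a factor of $d$ rather than something like $d|S|$, and simultaneously using the shift trick correctly so that $V_{\text{mid}} = \tfrac{1}{2}\max_s V^*_{\mathcal{T}^y}(s)$ (as opposed to $V_{\max}/(1-\gamma)$) really suffices. A secondary subtlety is verifying that one may use $V^*_{\mathcal{T}^y}$ in place of $V^\pi_{\mathcal{T}^y}$ inside the midpoint term — this should follow because the telescoping is symmetric in the two MDPs and one can arrange the argument so that the ``reference'' value is always the one from $\mathcal{T}^y$.
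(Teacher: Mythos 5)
Your proposal is correct, but it routes the value-difference machinery differently from the paper. The paper works directly with the optimal action-value functions: it expands $|Q^{*}_{\mathcal{T}^x}(s,\mathbf{u})-Q^{*}_{\mathcal{T}^y}(s,\mathbf{u})|$ one Bellman step, uses $|\max_{\mathbf{u}'}Q^{*}_{\mathcal{T}^x}-\max_{\mathbf{u}'}Q^{*}_{\mathcal{T}^y}|\leq\max_{\mathbf{u}'}|Q^{*}_{\mathcal{T}^x}-Q^{*}_{\mathcal{T}^y}|$ to absorb the policy mismatch, applies the same shift-by-$V_{mid}$ trick on the transport term (where the reference is exactly $\max_{\mathbf{u}'}Q^{*}_{\mathcal{T}^y}(s',\mathbf{u}')=V^{*}_{\mathcal{T}^y}(s')\in[0,2V_{mid}]$), and solves the resulting recursion $M\leq\epsilon_R+\gamma(M+\epsilon_P V_{mid})$; the extra $1/\gamma$ comes from a dummy start state. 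You instead prove a fixed-policy simulation lemma uniformly over $\pi$ and then sandwich with $\pi^{*}_{\mathcal{T}^x},\pi^{*}_{\mathcal{T}^y}$. Your capability algebra ($\|\Delta\|_\infty\leq\Psi$, $\epsilon_R\leq s_{max}\Psi$, $\|P^x-P^y\|_1\leq d\Psi$ via $\sum_{s'}\sum_j W_P(s',s,\mathbf{u})_j=d$) is identical to the paper's. Your route is slightly stronger: the policy-uniform bound delivers \cref{th:trans} essentially for free and avoids the dummy-state $1/\gamma$. The one point to tighten is the one you already flagged: with the hybrid you chose (rewards $R_{\mathcal{T}^x}$, dynamics $P_{\mathcal{T}^y}$), unrolling the dynamics-perturbation piece naturally produces $\tilde V^{\pi}$ (or $V^{\pi}_{\mathcal{T}^x}$) as the reference inside the transport term, and neither is a priori bounded by $2V_{mid}=\max_s V^{*}_{\mathcal{T}^y}(s)$; you must use the other hybrid (rewards $R_{\mathcal{T}^y}$, dynamics $P_{\mathcal{T}^x}$) so that the reference is $V^{\pi}_{\mathcal{T}^y}\leq V^{*}_{\mathcal{T}^y}\leq 2V_{mid}$, together with $V^{\pi}_{\mathcal{T}^y}\geq 0$ from the nonnegative rewards. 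With that ordering fixed, the argument goes through and yields the stated bound.
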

\cref{th:gen} gives an interesting decomposition of an upper bound to the difference of the optimal values between the two team compositions. The first terms in the square brackets on the RHS denotes contributions arising purely from substituting the old capacities with the new one. The second term denotes the contribution arising from a change in how much influence the agents have over the dynamics of the MMDP.

\begin{corollary}[Change in optimal value as a result of agent substitution] Let $\mathcal{T} \in \mathcal{C}^n$ be a team composition with influence weights $a \in \Delta_{n-1}$. If agent $i$ is substituted with $i'$ keeping $a_i$ unchanged such that $|\mathcal{T}_{i'}-\mathcal{T}_{i}|_\infty\leq \epsilon_C$ then the new team ($\mathcal{T}'$) optimal value follows: 
\begin{align}
    |V^{*}_{\mathcal{T}'}- V^{*}_{\mathcal{T}}|\leq \frac{(s_{max}+\gamma d V_{mid})a_i\epsilon_C}{\gamma(1-\gamma)}
\end{align} 
\end{corollary}

We define an important policy concept which captures the absolute optimality for an oracle with access to the capabilities. For the ease of exposition we consider fixed influence weights $a$ and define a metric on the joint capability space as $d_{a}(\mathcal{T}^x, \mathcal{T}^y) = |\sum_i a_i(\mathcal{T}^x_i -\mathcal{T}^y_i)|_{\infty}$. We similarly generalize this metric to distances between sets by taking the infimum of the distances between pairs of points in the cross product $d_a(\mathcal{M}_x,\mathcal{M}_y) \triangleq \inf_{\mathcal{T}^x\in \mathcal{M}_x, \mathcal{T}^y\in \mathcal{M}_y} d_{a}(\mathcal{T}^x, \mathcal{T}^y)$.  

\begin{definition}[Absolute Oracle]
Let $\pi^*_{\mathcal{M}}$ be the oracle policy which optimizes \cref{eqgenobj} ie. $\pi^*_{\mathcal{M}}$ is the multiplexer policy which given a team composition $\mathcal{T}$ behaves identically to the optimal policy for $\mathcal{T}^j$ where $\mathcal{T}^j \in \argmin_{\mathcal{T}^l \in Sup(\mathcal{M})} d_{a}(\mathcal{T}^l, \mathcal{T})$.
\end{definition}

We now answer the question of what happens when agents are trained on specific capabilities but the learnt policy is used on potentially unseen capabilities (this could occur e.g. due to changes in hardware components). 
\begin{theorem}[Transfer of optimal policy] Let $\mathcal{T}^x,\mathcal{T}^y \in \mathcal{C}^n$, $a^x, a^y \in \Delta_{n-1}$, $s_{max} = \max_{s} ||W_R s||_1$, $V_{mid} = \frac{1}{2}\max_{ s} V^{*}_{\mathcal{T}^y}(s)$. Let $\pi_y^*$ be the optimal policy for the team composed of agents with capabilities $\mathcal{T}^y$ and influence weights $a^y$. Then:
$$V^{*}_{\mathcal{T}^x} - V^{\pi_{y}^*}_{\mathcal{T}^x} \leq 2\frac{s_{max}+\gamma d V_{mid}}{\gamma(1-\gamma)} \Psi,$$
where $\Psi$ is defined as in \cref{eqpsidef}.
\label{th:trans}
\end{theorem}
\begin{corollary}[Out of distribution performance]
Let $\mathcal{T}\notin Sup(\mathcal{M})$ be an out of distribution task, we then have that the performance of the absolute oracle policy on $\mathcal{T}$ satisfies:
$$V^{*}_{\mathcal{T}} - V^{\pi^*_{\mathcal{M}}}_{\mathcal{T}} \leq 2\frac{s_{max}+\gamma d V_{mid}}{\gamma(1-\gamma)}d_{a}(\mathcal{T}, Sup(\mathcal{M})),$$
\end{corollary}

We now address the scenarios when the team population changes.
\begin{theorem}[Population decrease bound] For the team composition $\mathcal{T} \in \mathcal{C}^n$ with influence weights $a \in \Delta_{n-1}$. If agent $n$ is eliminated  followed by a re-normalization of influence weights, we have that for the remaining team ($\mathcal{T}^- \triangleq (\mathcal{T})_{i=1}^{n-1}$):
\begin{align}
    |V^{*}_{\mathcal{T}^-}- V^{*}_{\mathcal{T}}|\leq \frac{a_n(s_{max}+\gamma d V_{mid})}{\gamma(1-\gamma)}\Big|\sum_{i=1}^{n-1} \frac{a_i\mathcal{T}_i }{1-a_n} - \mathcal{T}_n \Big|_\infty
\end{align} 
\label{th:elim}
\end{theorem}

The special case when $\sum_{i=1}^{n-1} \frac{a_i\mathcal{T}_i }{1-a_n} = \mathcal{T}_n$ for the linear dynamics formulation when an agent-$n$ can in principle be rendered redundant if the rest of the agents in the team can effectively provide a perfect substitute. In fact, this holds true as long as capacity $\mathcal{T}_n$ can be formed from a convex combination of the capabilities $\mathcal{T}_i, i \in \{1..n-1\}$. The latter case however requires using the corresponding convex coefficients instead of re-normalization. A similar bound can be easily constructed for reusing the policy after an agent eliminated to give the corresponding transfer bound along the lines of \cref{th:trans}.

\begin{corollary}[Population increase bound] For the team composition $\mathcal{T} \in \mathcal{C}^n$ with influence weights $a \in \Delta_{n-1}$. If agent $n+1$ is added with capability $\mathcal{T}_{n+1}$ and weight $a_{n+1}$ (other weights scaled down by $\lambda = 1-a_{n+1}$) we have that for the new team ($\mathcal{T}^+ \triangleq (\mathcal{T}_1.. \mathcal{T}_n, \mathcal{T}_{n+1})$):
\begin{align}
    |V^{*}_{\mathcal{T}^+}- V^{*}_{\mathcal{T}}|\leq \frac{a_{n+1}(s_{max}+\gamma d V_{mid})}{\gamma(1-\gamma)}\Big|\sum_{i=1}^{n} a_i\mathcal{T}_i  - \mathcal{T}_{n+1} \Big|_\infty
\end{align}
\end{corollary}

We next extend the generalization bound \cref{th:gen} to include the scenario where the reward and the transition dynamics are not exactly linear but are approximately linear with deviation $\hat\epsilon_R$,$\hat\epsilon_P$ respectively.

\begin{theorem}[Approximate $\hat\epsilon_R$,$\hat\epsilon_P$ dynamics]
Let $\mathcal{T}^x,\mathcal{T}^y \in \mathcal{C}^n$, $a^x, a^y \in \Delta_{n-1}$ and the dynamics be only approximately linear so that $|R_{\mathcal{T}}(s)- \sum_{i=1}^n a_i \langle c_i \cdot W_Rs\rangle| \leq \hat\epsilon_R$ and $|P_{\mathcal{T}}(s'|s,\mathbf{u}) - \sum_{i=1}^n a_i \langle c_i \cdot  W_P(s',s,\mathbf{u}) \rangle|\leq \hat\epsilon_P$.
Then:
$$|V^{*}_{\mathcal{T}^x}- V^{*}_{\mathcal{T}^y}|\leq \frac{s_{max}+\gamma d V_{mid}}{\gamma(1-\gamma)} \Psi + \frac{2(\hat\epsilon_R+\gamma \hat\epsilon_P V_{mid})}{\gamma(1-\gamma)},$$
where $\Psi$ is defined as in \cref{eqpsidef}.
\label{th:gen_err}
\end{theorem}
The other bounds for transfer and population change can similarly be obtained for the approximate dynamics case.

We now consider the scenario when the capabilities are not directly observed but inferred using an approximator which in-turn introduces some errors in their estimation (this could happen due to noise in observations, inaccurate implicit or explicit communication protocols, etc.).
\begin{theorem}[Error from estimation of capabilities]
For the team composition $\mathcal{T} \in \mathcal{C}^n$ with influence weights $a \in \Delta_{n-1}$. If the agent capabilities are inaccurately inferred as $\hat{\mathcal{T}}$ with $\max_i|\mathcal{T}_i- \hat{\mathcal{T}}_i|_\infty \leq \epsilon_{\mathcal{T}}$ and agents learn the inexact policy $\hat{\pi}^*$ then:
$$|V^{*}_{\mathcal{T}}- V^{\hat{\pi}^*}_{\mathcal{T}}| \leq \frac{2\epsilon_{\mathcal{T}}(s_{max}+\gamma d V_{mid})}{\gamma(1-\gamma)}$$ where $V_{mid} = \frac{1}{2} \max_s V_{\hat{\mathcal{T}}}^*(s)$
\label{th:err_est}
\end{theorem}

We note that all our results can be easily extended to the setting where rewards $R_{\mathcal{T}}(s) = \langle f(\mathcal{T}) \cdot W_Rs \rangle$, $f(\mathcal{T})$ is not linear in capabilities as in \cref{eqlinr} but is Lipschitz with coefficient $L_i$ for $i \in \mathcal{A}$. For eg. \cref{th:gen} becomes: 
\begin{theorem}
For rewards $L_i$ Lipschitz in the capabilities with respect to $|\cdot|_\infty$ norm, the difference in optimal values between team compositions $\mathcal{T}^x, \mathcal{T}^y$ satisfy:
$$|V^{*}_{\mathcal{T}^x}- V^{*}_{\mathcal{T}^y}|\leq \frac{s_{max}\sum_{i=1}^{n} L_i| \mathcal{T}^x_i - \mathcal{T}^y_i|_\infty}{\gamma(1-\gamma)}$$
\label{th:gen_lip}
\end{theorem}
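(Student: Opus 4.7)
The plan is to mirror the proof of \cref{th:gen} in a simplified form, since now only the reward depends on $\mathcal{T}$ (the transition kernel is held fixed), which is why the $V_{mid}$ transition contribution disappears from the bound. The first step is the standard policy-swap reduction: letting $\pi^*_x$ and $\pi^*_y$ denote optimal policies for the two variations,
\[
V^{*}_{\mathcal{T}^x} - V^{*}_{\mathcal{T}^y} = V^{\pi^*_x}_{\mathcal{T}^x} - V^{\pi^*_y}_{\mathcal{T}^y} \leq V^{\pi^*_x}_{\mathcal{T}^x} - V^{\pi^*_x}_{\mathcal{T}^y},
\]
where the inequality uses optimality of $\pi^*_y$ in $\mathcal{T}^y$. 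Applying the symmetric inequality yields $|V^{*}_{\mathcal{T}^x}-V^{*}_{\mathcal{T}^y}| \leq \sup_\pi |V^{\pi}_{\mathcal{T}^x}-V^{\pi}_{\mathcal{T}^y}|$, so it suffices to bound the right-hand side uniformly in $\pi$.

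Next, I would fix a policy $\pi$ and an initial state $s_0$. Because the state-trajectory law is common to both variations, the value gap telescopes into a discounted sum of reward gaps,
\[
\big|V^{\pi}_{\mathcal{T}^x}(s_0)-V^{\pi}_{\mathcal{T}^y}(s_0)\big| \leq \mathbb{E}_\pi\!\left[\sum_{t\geq 0}\gamma^{t}\big|R_{\mathcal{T}^x}(s_t)-R_{\mathcal{T}^y}(s_t)\big|\right].
\]
The central per-state bound comes from writing $R_{\mathcal{T}}(s)=\langle f(\mathcal{T}), W_R s\rangle$ and applying Holder's inequality:
\[
|R_{\mathcal{T}^x}(s)-R_{\mathcal{T}^y}(s)| \leq \|W_R s\|_1 \cdot |f(\mathcal{T}^x)-f(\mathcal{T}^y)|_\infty \leq s_{max}\,|f(\mathcal{T}^x)-f(\mathcal{T}^y)|_\infty.
\]
The per-agent Lipschitz hypothesis is then invoked via a telescoping substitution of one coordinate $\mathcal{T}_i$ at a time, yielding $|f(\mathcal{T}^x)-f(\mathcal{T}^y)|_\infty \leq \sum_i L_i |\mathcal{T}^x_i-\mathcal{T}^y_i|_\infty$.

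Summing the geometric series in $\gamma$ delivers the bound $\tfrac{s_{max}\sum_i L_i |\mathcal{T}^x_i-\mathcal{T}^y_i|_\infty}{1-\gamma}$, which is in fact slightly tighter than what \cref{th:gen_lip} states; the additional $1/\gamma$ factor in the displayed theorem appears to be kept purely for formal uniformity with \cref{th:gen}, whose transition-dependent $V_{mid}$ term requires that factor (as already noted in the footnote to \cref{th:gen}). The main obstacle is pinning down what ``$L_i$-Lipschitz for $i\in\mathcal{A}$'' means operationally: the argument goes through cleanly under a \emph{per-coordinate} Lipschitz assumption (changing only $\mathcal{T}_i$ changes $f$ in $|\cdot|_\infty$ by at most $L_i\,|\Delta\mathcal{T}_i|_\infty$), which is exactly what the telescoping step needs in order to produce an additive rather than joint bound; this interpretation should be stated explicitly. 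Once it is fixed, the argument is a direct specialization of the reasoning behind \cref{th:gen} with the transition contribution removed.
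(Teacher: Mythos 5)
Your proof is correct, and its central estimate is exactly the paper's: the paper also writes $R_{\mathcal{T}}(s)=\langle f(\mathcal{T})\cdot W_Rs\rangle$, builds the interpolating sequence $\mathcal{T}^1=\mathcal{T}^x,\dots,\mathcal{T}^{n+1}=\mathcal{T}^y$ changing one agent's capability at a time, and applies H\"older plus the per-coordinate Lipschitz hypothesis to get $\epsilon_R=\max_s|R_{\mathcal{T}^x}(s)-R_{\mathcal{T}^y}(s)|\leq s_{max}\sum_i L_i|\mathcal{T}^x_i-\mathcal{T}^y_i|_\infty$; your explicit remark that ``$L_i$-Lipschitz for $i\in\mathcal{A}$'' must be read per-coordinate is precisely the reading the paper's telescoping sequence relies on. Where you diverge is in lifting $\epsilon_R$ to a value bound. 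The paper (in \cref{app:lip_rews}) stops at the $\epsilon_R$ estimate and implicitly reuses the \cref{th:gen} pipeline: the Bellman fixed-point contraction on $\max_{s,\mathbf{u}}|Q^*_{\mathcal{T}^x}-Q^*_{\mathcal{T}^y}|$ with $\epsilon_P=0$, followed by the dummy-state construction that converts a $Q$-bound into a bound on $V^\pi=\mathbb{E}_{\pi,\rho}[\sum_t\gamma^tR(s_t)]$ and is the sole source of the extra $1/\gamma$. You instead use the policy-swap inequality $|V^*_{\mathcal{T}^x}-V^*_{\mathcal{T}^y}|\leq\sup_\pi|V^\pi_{\mathcal{T}^x}-V^\pi_{\mathcal{T}^y}|$ and exploit that, with transitions independent of $\mathcal{T}$, the trajectory law under a fixed $\pi$ is shared, so the gap telescopes into $\epsilon_R/(1-\gamma)$. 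Your route is more elementary for this special case, avoids the dummy state, and yields the strictly tighter constant $1/(1-\gamma)$ in place of $1/(\gamma(1-\gamma))$ (which of course still implies the stated theorem); the paper's route has the advantage of handling $\mathcal{T}$-dependent transitions uniformly, which is why the $1/\gamma$ survives into the statement of \cref{th:gen_lip}. One caveat: the policy-swap step needs $\pi^*_x$ to be a valid policy for the $\mathcal{T}^y$ variation, which holds here since the action and state spaces do not depend on $\mathcal{T}$, but is worth saying.
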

See \cref{app:lip_rews} for proof, which also provides a method for extending the other results in a similar fashion.
For the case of general dependence of $f$ on $\mathcal{T}$ (as is common for dense capability embeddings), see \cref{app:gen_rews}. We also present an insight as to why generalization becomes harder in this setting. 
We provide experiments elucidating the bounds stated above in \cref{sec:gen_bound_exps}.

\section{Experimental Setup}\label{sec:experiments}

We evaluate the ability of existing MARL algorithms to generalize to novel settings where the capabilities of teammates change during the training.
We are interested in evaluating the gap between settings encountered during training and held-out agent configurations reserved for testing. 
Furthermore, we aim to study how well algorithms ground privileged information about teammate capabilities and use that during unseen settings at test time.
Lastly, we evaluate the bounds derived in \cref{sec:analysis} on a simple\tr{what's simple about and why do you need to highlight it?} multi-agent problem.

\subsection{Environments}
\subsubsection{Fruit Forage}
We use the fruit forage task on a grid world to empirically demonstrate the generalisation bounds in \cref{sec:analysis}. On a $8\times 8$ grid world we have $n$ agents and $d$ types of fruit trees. For each agent $i$, $\mathcal{T}_i(j), j \in \{1..d\}$ represents the utility of fruit $j$ for agent $i$. The state vector is appended with the $d$ dimensional binary vector representing whether each of the tree types has foraged at a given time step. The details for the team compositions can be found in \cref{app:fruit}. 
\subsubsection{Predator Prey}
We consider the grid-world version of the multi-agent Predator Prey task where 4 agents have to hunt 4 prey in an $8 \times 8$ grid. Here, both predators and preys have certain capabilities. Specifically, each predator has a parameter describing the hit point damage it can cause the prey. Similarly, the prey comes with variations in health. For example, a prey with a capability of 5 can only be caught if the total capability of agents taking the capture action simultaneously on it have capabilities $\geq$ 5 (such as [1,1,1,2]), otherwise, the whole team receives a penalty $p$. 
Here, we test for generalization to novel team composition where test tasks contain a team composition which has not been encountered during training (PP Unseen Team in Figure \ref{fig:pp_v1_v2}), and additionally test tasks where novel team compositions can also have agent types with capabilities not encountered during training (PP Unseen Team, Agent in Figure \ref{fig:pp_v1_v2}). More details are provided in the \cref{app:pred_prey}.%
\subsubsection{StarCraft II}
To assess the generalization capabilities of modern MARL approaches, we make use of a modified version of StarCraft II unit micromanagement tasks of the SMAC benchmark \cite{samvelyan2019starcraft}. Particularly, we consider novel scenarios featuring three unit types from each race of the game where the team composition changes during training and testing, unlike standard SMAC which is static. %
The opponent's team is always identical to the ally team which ensures that the optimal win rate is close to $1$. In the simple cases (\texttt{10\_Protoss}, \texttt{10\_Zerg}, and \texttt{10\_Terran}), agents are trained on various team formations of 10 units that feature all combinations of one, two, and all three unit types, and is later tested on held out team formations.
In the hard cases (\texttt{10\_Protoss\_Hard}, \texttt{10\_Zerg\_Hard}, and \texttt{10\_Terran\_Hard}), agents are exposed to various team formations including two unit types during training. During testing, however, the agents encounter held-out scenarios featuring scenarios with using all three unit types (see \cref{app:smac} for more details). 
In these tasks, agent capabilities are described as a one-hot encoding of agent types.

To test performance on continuously varying capabilities, we also use variants of the environment where either the health or attack accuracy of certain units are reduced. We randomize these configurations for the allied units during training and later test on held-out team configurations.
We evaluate baselines on the \texttt{3m}, \texttt{2s3z}, \texttt{8m} scenarios from the original benchmark with these modifications. The varying team size also helps understand how grounding the capabilities becomes harder as team size increases. Here agent capabilities are described as their accuracy or health coefficients. Further details are provided in the \cref{app:smac}.

\subsection{Baselines}

Our empirical evaluation is based on various types of MARL algorithms. 
We make use of two popular value-based approaches, QMIX \citep{rashid2020monotonic} and VDN \citep{sunehag2017value} that trained fully decentralized policies in a centralized fashion. We also use policy gradient method PPO \cite{schulmanProximalPolicyOptimization2017} that recently shown good results on various MARL domains, both with decentralised (Independent PPO) \citep{dewitt2020independent} and centralised critics (MAPPO) \cite{yu2021surprising}. We access the performance of all baselines when the information about teammates capabilities are provided as observation (denoted with a `C' in parentheses) and when it is not. \textbf{The evaluation procedure, architectures and training details are presented in \cref{app:architecture}.}

\section{Results and Discussion}\label{sec:results}

\subsection{Generalization Bounds}
\label{sec:gen_bound_exps}
\begin{figure}[htb!]
    \centering
    \subfigure[Theorem 1]{
        \includegraphics[width=0.14\textwidth]{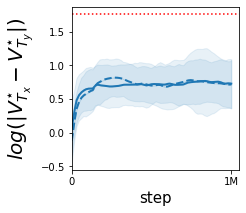}
        \label{fig:th1}
    }
    \subfigure[Theorem 2]{
        \includegraphics[width=0.14\textwidth]{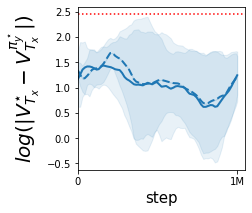}
        \label{fig:th2}
    }
    \subfigure[Theorem 3]{
        \includegraphics[width=0.14\textwidth]{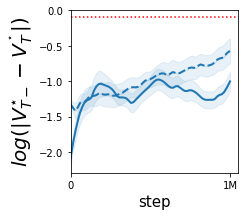}
        \label{fig:th3}
    }
    \caption{Evaluating the bounds for QMIX on Fruit Forage domain. Dashed blue line indicates the setting where agent capabilities are observable. The \textcolor{red}{red dotted line} indicates the corresponding upper bound for each theorem. \am{Bound values T1: 5.81, T2:11.62, T3: 0.91}} %
    \label{fig:theorems}
\end{figure}

\begin{figure*}[htb!]
    \centering
    \includegraphics[width=.43\linewidth]{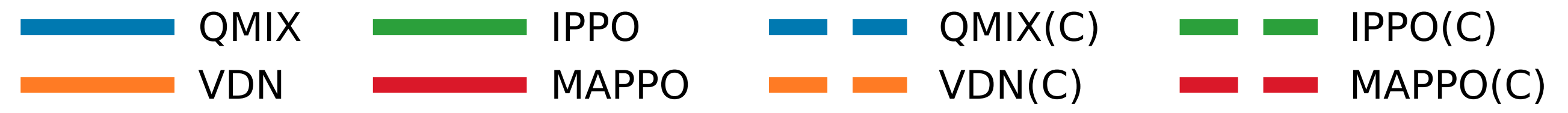}\\
    \subfigure[PP Unseen Team]{
        \includegraphics[width=0.2\linewidth]{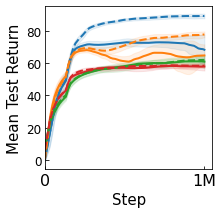}\label{fig:pp1mtr}
    }
    \subfigure[PP Unseen Team]{
        \includegraphics[width=0.2\linewidth]{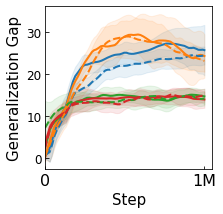}\label{fig:pp1gg}
    }
    \subfigure[PP Unseen Team, Agent]{
        \includegraphics[width=0.2\linewidth]{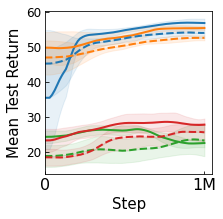}\label{fig:pp2mtr}
    }
    \subfigure[PP Unseen Team, Agent]{
        \includegraphics[width=0.21\linewidth]{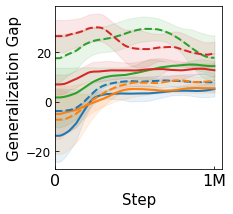}\label{fig:pp2gg}
    }

    \caption{Experimental results for the Predator Prey domain. Standard deviation is shaded.}
    \label{fig:pp_v1_v2}
\end{figure*}

\begin{figure*}[htb!]
    \centering
    \includegraphics[width=.43\linewidth]{figures/plots/sc2_baseline_legend.png}
    \includegraphics[width=\linewidth]{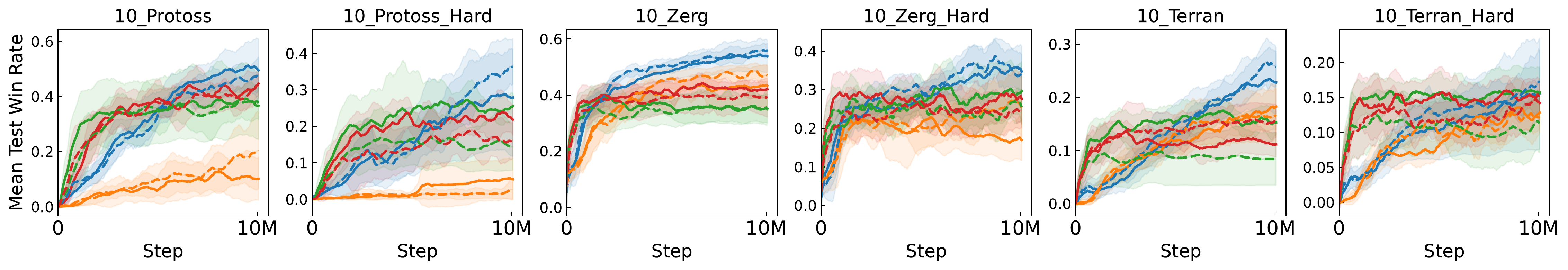}
    \includegraphics[width=\linewidth]{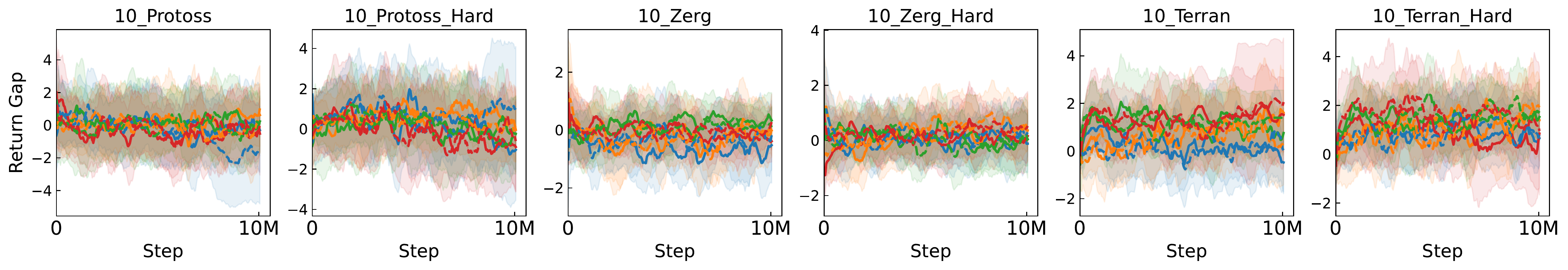}
    \includegraphics[width=\linewidth]{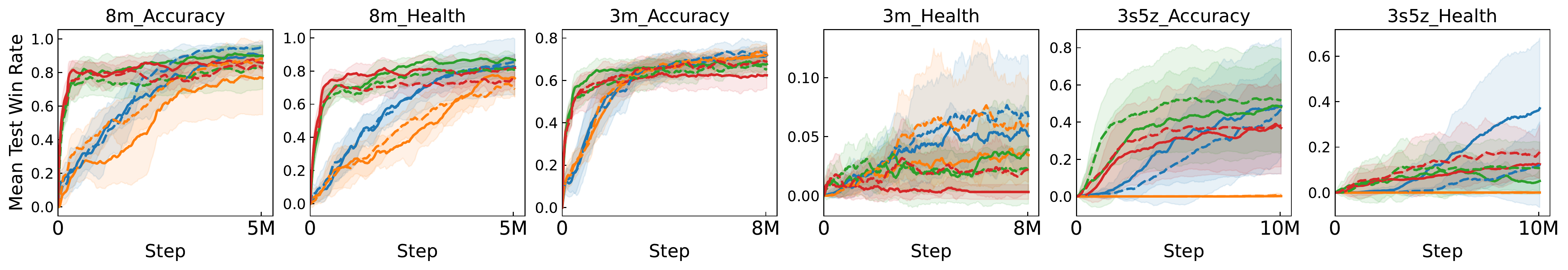}
    \includegraphics[width=\linewidth]{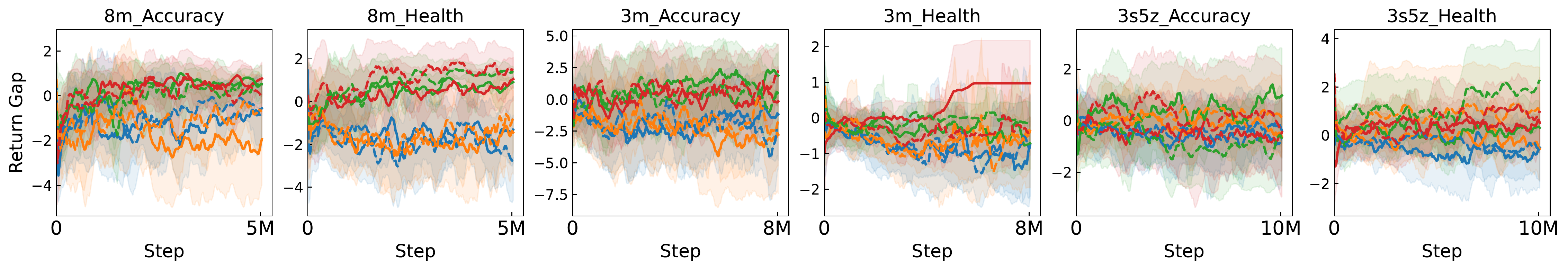}
    \caption{Experimental results on the SMAC benchmark. Standard deviation is shaded. Rows show win rates and generalization gaps.}
    \label{fig:smac}
    \vspace{-3mm}
\end{figure*}

\cref{fig:theorems} provides empirical evaluation of bounds presented in \cref{sec:analysis} in the Fruit Forage domain. We present the plots for training the agents for one million steps of training using QMIX. \cref{fig:th1} shows that the policies in both the domains converge quickly leading to a stable difference in performance thus comfortably satisfying \cref{th:gen}. \cref{fig:th2} showing the gap between optimal and transferred policy shows interesting variations as training proceeds (we posit this happens because the transferred policy becomes steadily specialized thus getting less useful for the target task) the bound in \cref{th:trans} gives a tight fit despite these variations. Finally, we see similarly good fit for the agent elimination scenario in \cref{th:elim} in \cref{fig:th3}.

\subsection{Utilizing Information of Agent Capabilities}

\cref{fig:pp_v1_v2} presents the results of the baselines on Predator Prey domain. We can observe from \cref{fig:pp1mtr} that providing additional information on agent capabilities improves the test-time performance of the baselines with the maximal effect seen on QMIX and VDN. Furthermore, when capabilities are observable to the agents, baselines are able to generalize to new team compositions \cref{fig:pp1gg}, thus successfully grounding the additional information. This hypothesis is additionally supported by the fact that knowing agent capabilities result in a lower generalization gap. Finally, the gap between the settings with known vs. unknown capabilities (dashed vs solid) indicates that agents have likely not come up with any appropriate protocol to communicate their capabilities during test time. %
We also note that the PPO variants do not perform as well as the value-based approaches. Therefore, their low generalization gap \cref{fig:pp1gg} is unlikely representative of good grounding of capability. We posit that this is just because PPO agents are ignoring the privileged information when available. 

For a harder scenario, where both new team composition and agent types appear during evaluation \cref{fig:pp2mtr}, we observe that the situation is reversed from the previous setting as the agents which do not have access to the other's capabilities now perform slightly better. This is strongly indicative of insufficient grounding of the privileged information given to them, which highlights the need for better grounding mechanisms to obtain CG. We see a similar pattern on generalization gap in \cref{fig:pp2gg} where privileged information hurts the performance and is likely perceived as observation noise.

On the more challenging domain of StarCraft, we see that for easier capability variations of health and accuracy (as they are continuous and more readily usable for an agent's immediate actions), knowing the capabilities is advantageous to the agent during test time. Moreover, the relative gains of knowing the privileged information go down as the task difficulty increases. The accuracy variations tend to be easier as typical joint policies like focus fire remain unchanged. Moreover, health variations on the smaller team make the task much harder than bigger team due to relative loss in team hit points. In this regard, \texttt{8m, 3s5z} accuracy versions show good grounding and generalization. This changes as tasks get harder.
On the harder tasks which involve swapping unit types within \texttt{Protoss, Zerg, Terran} races, we observe that knowing the capabilities of other agents gives little advantage. This is especially noticeable on the \texttt{Hard} versions where all unit types are never within a single team during training. Furthermore, with win-rate performances on these maps being low, we hypothesise that the agents do not successfully utilize the capability information. Thus, it is highly unlikely that they learn any meaningful communication protocols for exchanging capability information. 
\textbf{For full StarCraft II results, including \texttt{8m\_vs\_9m} \& \texttt{10m\_vs\_11m} scenarios, see \cref{app:results}}.

Compared to the relatively simple Predator Prey task, generalization in StarCraft proved to be more difficult for the baselines. Although static versions of SMAC environments are comfortably solved by them \cite{rashid2020monotonic, dewitt2020independent, yu2021surprising}, changing unit formations or unit health/accuracy makes the tasks significantly difficult, even for configurations seen during the training. As observed in \cref{fig:smac}, providing the capability information does not consistently improve the test-time performance. This suggests the poor grounding abilities of the baseline algorithms, which reinforces the need for better grounding mechanisms in the MARL algorithms (e.g., forward dynamics prediction as in \cite{jaderberg2016reinforcement}). The failure to generalize on index-based privileged information regarding agent types suggests using mechanisms such as latent embeddings to compose and reason about capabilities. Finally, a low test performance gap between agents having privileged information vs those which do not, coupled with a low generalization gap, suggests that these methods do not facilitate information sharing between the agents, which is another desideratum towards attaining CG.

\section{Related Work}

\textbf{Multi-agent systems} \citep{claus1998dynamics, busoniu_comprehensive_2008} offer means to overcome theoretical barriers like exponential blow up in state-action space and compute resource requirements for large problems. 
MARL is a promising approach for training cooperative MAS. Recent progress in cooperative MARL \citep{lowe_multi-agent_2017, sunehag2017value, rashid2020monotonic, pmlr-v139-mahajan21a} has demonstrated impressive applications in solving complex tasks in games such as StarCraft II \citep{samvelyan2019starcraft}. Specialized methods which improve exploration in MARL have been proposed using hierarchical learning \citep{mahajan2019maven} and successor features \citep{gupta2021uneven}. Methods for factorizing the action space \citep{wang2020rode} have shown improvement in sample complexity. \citet{iqbal2021randomized} regularize value functions to share factors comprised of sub-groups of entities, in order to transfer knowledge across cooperative tasks. In the competitive/general sum MARL space \citep{lowe_multi-agent_2017, openaiDotaLargeScale2019} have shown impressive performance on complex tasks. \citet{pmlr-v119-vezhnevets20a} use an options framework to learn agents which generalize against different opponents. \cite{czarnecki2020real, tuyls2020bounds,piliouras2021evolutionary} explore the structural and theoretical properties of general payoff games. 

\textbf{Ad-hoc coordination} was first formalised by \citet{stone2010ad} by modelling the multi-agent problem as a single-agent task and using competency scores to measure agent compatibility. Methods for using explicit hard-coded protocols for adaptations were explored in \cite{tambe1997towards, grosz1996collaborative}. Opponent modelling for general game was explored in \cite{stone2000defining, markovitch2005learning, ledezma2004predicting}. Several approaches to the ad-hoc cooperation problem assume that the behaviour of other agents in the ensemble are fixed \cite{bowling2005coordination}. 
Planning methods like Monte Carlo tree search are used for finding optimal adaptation policy from a fixed set of choices \citep{barrett2011empirical, albrecht2016belief, albrecht2019reasoning}. \citet{nikolaidis2014efficient} develop over this by enabling learning a set of behaviours for the adapting agent while performing the task with human agents instead of assuming that it is given beforehand. Recent methods allow a change in the behaviour of the other agents to ones picked from a fixed set and account for the possible non-stationarities using change point detection \cite{hernandez2017efficiently, ravula2019ad}. However, these methods do not consider arbitrary learning for other agents. Furthermore, they do not focus on generalization to unseen agent capabilities.

\textbf{Generalization in RL} aims to develop approaches that generalize well to the novel, unseen scenarios after training \citep{kirk2022survey}. Such methods avoid overfitting to seen tasks and can produce robust behaviour when deployed to novel settings. %
Recent work on generalization in single-agent RL make use of techniques such as
data augmentation \citep{raileanu2021automatic, kostrikov2021image},
environment generation \citep{openendedlearningteam2021openended, melting-pot},
encoding inductive biases \citep{higginsDARLAImprovingZeroShot2018}, 
and regularization \citep{pmlr-v97-cobbe19a}.
Methods in contextual MDPs \cite{hallak2015contextual, zhang2020multi} also provide generalization with guarantees. 
Recent work also elucidate some of the fundamental bounds arising from computational complexity which prevents sample efficient generalization \citep{duGoodRepresentationSufficient2020,ghoshWhyGeneralizationRL2021, malikWhenGeneralizableReinforcement2021}.
\vspace{-2mm}	
\section{Conclusion and Future work}
In this work, we studied the generalization properties in multi-agent systems (MAS) following Markovian dynamics with a linear dependence of dynamics on the agent capabilities. We showed how the framework extends the successor feature setting to MAS. We explored performance bounds for various interesting scenarios arising in the MAS including generalization, transfer, agent substitutions, approximate inference of capabilities and deviations in environment dynamics. Furthermore, we showed how the bounds can be extended to the Lipschitz reward setting and elucidated the most general form of rewards and how they make generalization difficult. Finally, we extensively tested the popular MARL algorithms on domains presenting a wide spectrum of hardness for CG. We saw that while some algorithms demonstrated preliminary CG on easier domains, all of the algorithms are insufficient towards ensuring CG on the challenging domains. We further highlighted how the first step towards ensuring CG should be ensuring proper grounding of agent capabilities. For future work, we aim to provide tighter bounds for CG for more general dynamics and create methods for better grounding of agent capabilities.	
\section{Acknowledgements}
Anuj Mahajan is funded by the J.P. Morgan A.I. Fellowship. Tarun Gupta is
supported by the Oxford University Clarendon Scholarship. Part of the experiments were made possible by generous equipment grant by NVIDIA. 	
\clearpage
\newpage
\bibliography{metamarl.bib}
\bibliographystyle{icml2022}
\clearpage
\newpage
\onecolumn
\appendix
\section{Proofs}
\label{app:proofs}
\addtocounter{theorem}{-6}

\allowdisplaybreaks

\subsection{Generalisation between team compositions}\label{app:team_comp}
\begin{theorem}[Generalisation between team compositions]
Let team compositions $\mathcal{T}^x,\mathcal{T}^y \in \mathcal{C}^n$ with influence weights $a^x, a^y \in \Delta_{n-1}$, $s_{max} = \max_{s} ||W_R s||_1$
\am{this can be bound in terms of $W_R$ norm}, $V_{mid} = \frac{1}{2} \max_{s} V^{*}_{\mathcal{T}^y}(s)$, Then\footnote{for $\gamma \in (0, \frac{\sqrt{5}-1}{2})$ we can replace $\frac{1}{\gamma(1-\gamma)}$ by $\frac{1+\gamma}{1-\gamma}$}:
$$|V^{*}_{\mathcal{T}^x}- V^{*}_{\mathcal{T}^y}| \leq \frac{s_{max}+\gamma d V_{mid}}{\gamma(1-\gamma)} \Psi \text{, where}$$
$$\Psi = \Big[|\sum_i a^x_i(\mathcal{T}^x_i -\mathcal{T}^y_i)|_{\infty} + |\sum_i (a^x_i-a^y_i)\mathcal{T}^y_i |_{\infty}\Big]$$
\label{th:gen}
\begin{proof}
Let $\epsilon_R = \max_{s}|R_{\mathcal{T}^x}(s)-R_{\mathcal{T}^y}(s)|$ and $\epsilon_P = \max_{s,\mathbf{u}} 2\cdot D_{TV}\Big(P_{\mathcal{T}^x}(\cdot|s,\mathbf{u}), P_{\mathcal{T}^y}(\cdot|s,\mathbf{u})\Big)$ where $D_{TV}$ is the total variation distance.
We have that:
\begin{align}
    &|Q^{*}_{\mathcal{T}^x}(s, \mathbf{u}) - Q^{*}_{\mathcal{T}^y}(s, \mathbf{u})| \\ &=|R_{\mathcal{T}^x}(s)-R_{\mathcal{T}^y}(s)
    +\gamma\Big(\sum_{s'} P_{\mathcal{T}^x}(s'|s, \mathbf{u})\max_{u'}Q^{*}_{\mathcal{T}^x}(s', \mathbf{u'})
    -\sum_{s'} P_{\mathcal{T}^y}(s'|s, \mathbf{u})\max_{u'}Q^{*}_{\mathcal{T}^y}(s', \mathbf{u'}))\Big)|\\
    &\leq |R_{\mathcal{T}^x}(s)-R_{\mathcal{T}^y}(s)| +\gamma\Big\{|\sum_{s'} P_{\mathcal{T}^x}(s'|s, \mathbf{u})\Big[\max_{u'}Q^{*}_{\mathcal{T}^x}(s', \mathbf{u'})-\max_{u'}Q^{*}_{\mathcal{T}^y}(s', \mathbf{u'})\Big]|\\
    &\quad+|\sum_{s'} \Big[P_{\mathcal{T}^x}(s'|s, \mathbf{u})-P_{\mathcal{T}^y}(s'|s, \mathbf{u})\Big](\max_{u'}Q^{*}_{\mathcal{T}^y}(s', \mathbf{u'})-V_{mid})|\Big\} \text{\am{this is TV}}\\
    &\leq \epsilon_R 
    +\gamma\Big\{\sum_{s'} P_{\mathcal{T}^x}(s'|s, \mathbf{u})|\max_{u'}Q^{*}_{\mathcal{T}^x}(s', \mathbf{u'})-\max_{u'}Q^{*}_{\mathcal{T}^y}(s', \mathbf{u'})|+\sum_{s'} |P_{\mathcal{T}^x}(s'|s, \mathbf{u})-P_{\mathcal{T}^y}(s'|s, \mathbf{u})||\max_{u'}Q^{*}_{\mathcal{T}^y}(s', \mathbf{u'})-V_{mid}|\Big\}\\
    &\leq \epsilon_R 
    +\gamma\Big\{\sum_{s'} P_{\mathcal{T}^x}(s'|s, \mathbf{u})\max_{u'}|Q^{*}_{\mathcal{T}^x}(s', \mathbf{u'})-Q^{*}_{\mathcal{T}^y}(s', \mathbf{u'})|
    +2\cdot D_{TV}\Big(P_{\mathcal{T}^x}(s'|s, \mathbf{u}),P_{\mathcal{T}^y}(s'|s, \mathbf{u})\Big)V_{mid}\Big\}\\
    &\leq \epsilon_R 
    +\gamma\Big\{\max_{s',u'}|Q^{*}_{\mathcal{T}^x}(s', \mathbf{u'})-Q^{*}_{\mathcal{T}^y}(s', \mathbf{u'})|+\epsilon_P V_{mid}\Big\}
\end{align}
Next taking $\max$ w.r.t. ${s,u}$ of the above we get:
\begin{align}
   \max_{s,u} |Q^{*}_{\mathcal{T}^x}(s, \mathbf{u}) - Q^{*}_{\mathcal{T}^y}(s, \mathbf{u})| \leq  \frac{\epsilon_R+\gamma \epsilon_P V_{mid}}{1-\gamma}
\end{align}
We now bound the deviation quantities appearing above:
\begin{align}
    \epsilon_R &= \max_{s}|R_{\mathcal{T}^x}(s)-R_{\mathcal{T}^y}(s)|\\
    &=\max_{s}|\sum_{i=1}^n a^x_i \langle \mathcal{T}^x_i \cdot W_Rs\rangle-\sum_{i=1}^n a^y_i \langle \mathcal{T}^y_i \cdot W_Rs\rangle| \text{\am{note y's ind can be permuted}}\\
    &\leq \max_{s}\Big[|\sum_{i=1}^n a^x_i \langle (\mathcal{T}^x_i -\mathcal{T}^y_i) \cdot W_Rs\rangle| + |\sum_{i=1}^n (a^x_i-a^y_i) \langle \mathcal{T}^y_i \cdot W_Rs\rangle|\Big]\\
    &\leq\max_{s}\Big[ |\sum_i a^x_i(\mathcal{T}^x_i -\mathcal{T}^y_i)|_{\infty}|W_Rs|_1 + |\sum_i (a^x_i-a^y_i)\mathcal{T}^y_i |_{\infty}|W_Rs|_1\Big]\\
    &= s_{max}\Big[|\sum_i a^x_i(\mathcal{T}^x_i -\mathcal{T}^y_i)|_{\infty} + |\sum_i (a^x_i-a^y_i)\mathcal{T}^y_i |_{\infty}\Big]
\end{align}

Similarly,
\begin{align}
    \epsilon_P &= \max_{s,\mathbf{u}} 2\cdot D_{TV}\Big(P_{\mathcal{T}^x}(\cdot|s,\mathbf{u}), P_{\mathcal{T}^y}(\cdot|s,\mathbf{u})\Big)\\
    &= \max_{s,\mathbf{u}} \sum_{s'} |P_{\mathcal{T}^x}(s'|s, \mathbf{u})-P_{\mathcal{T}^y}(s'|s, \mathbf{u})|\\
    &=\max_{s,\mathbf{u}}\sum_{s'}|\sum_{i=1}^n a^x_i \langle \mathcal{T}^x_i \cdot W_P(s',s,\mathbf{u})\rangle-\sum_{i=1}^n a^y_i \langle \mathcal{T}^y_i \cdot W_P(s',s,\mathbf{u})\rangle|\\
    &\leq \max_{s,\mathbf{u}}\sum_{s'}\Big[|\sum_{i=1}^n a^x_i \langle (\mathcal{T}^x_i -\mathcal{T}^y_i) \cdot W_P(s',s,\mathbf{u})\rangle| + |\sum_{i=1}^n (a^x_i-a^y_i) \langle \mathcal{T}^y_i \cdot W_P(s',s,\mathbf{u})\rangle|\Big]\\
    &\leq\max_{s,\mathbf{u}}\sum_{s'}\Big[ |\sum_i a^x_i(\mathcal{T}^x_i -\mathcal{T}^y_i)|_{\infty}|W_P(s',s,\mathbf{u})|_1 + |\sum_i (a^x_i-a^y_i)\mathcal{T}^y_i |_{\infty}|W_P(s',s,\mathbf{u})|_1\Big]\\
    &= \Big[|\sum_i a^x_i(\mathcal{T}^x_i -\mathcal{T}^y_i)|_{\infty} + |\sum_i (a^x_i-a^y_i)\mathcal{T}^y_i |_{\infty}\Big]\max_{s,\mathbf{u}}\sum_{s'}|W_P(s',s,\mathbf{u})|_1\\    
    &=d\Big[|\sum_i a^x_i(\mathcal{T}^x_i -\mathcal{T}^y_i)|_{\infty} + |\sum_i (a^x_i-a^y_i)\mathcal{T}^y_i |_{\infty}\Big]
\end{align}

Thus, we get:
\begin{align}
    |Q^{*}_{\mathcal{T}^x}(s, \mathbf{u}) - Q^{*}_{\mathcal{T}^y}(s, \mathbf{u})|\leq \frac{s_{max}+\gamma d V_{mid}}{1-\gamma} \Big[|\sum_i a^x_i(\mathcal{T}^x_i -\mathcal{T}^y_i)|_{\infty} + |\sum_i (a^x_i-a^y_i)\mathcal{T}^y_i |_{\infty}\Big]
\end{align} 
Finally we get the value difference bound by considering a dummy state $s^{\#}$ which always transitions according to $\rho$ and then using the Bellman equation. (Note that for $\gamma \in (0, \frac{\sqrt{5}-1}{2})$ we can replace $\frac{1}{\gamma(1-\gamma)}$ by $\frac{1+\gamma}{1-\gamma}$ for a tighter bound without considering a dummy start state)
\end{proof}
\end{theorem}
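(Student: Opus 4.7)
The plan is to control $|V^*_{\mathcal{T}^x} - V^*_{\mathcal{T}^y}|$ by first bounding $|Q^*_{\mathcal{T}^x}(s,\mathbf{u}) - Q^*_{\mathcal{T}^y}(s,\mathbf{u})|$ via a simulation-lemma style argument and then translating back to values, and finally bounding the reward and transition perturbations $\epsilon_R := \max_s |R_{\mathcal{T}^x}(s) - R_{\mathcal{T}^y}(s)|$ and $\epsilon_P := \max_{s,\mathbf{u}} 2 D_{TV}(P_{\mathcal{T}^x}(\cdot|s,\mathbf{u}), P_{\mathcal{T}^y}(\cdot|s,\mathbf{u}))$ using the linear structure in \cref{eqlinr}--\cref{eqlinp}.

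First I would apply the Bellman optimality equation to each $Q^*$, subtract, and split the resulting expression into three pieces: a reward difference contributing $\epsilon_R$, a term $\gamma \sum_{s'} P_{\mathcal{T}^x}(s'|s,\mathbf{u})[\max_{u'}Q^*_{\mathcal{T}^x}(s',\mathbf{u'}) - \max_{u'}Q^*_{\mathcal{T}^y}(s',\mathbf{u'})]$ that will be handled by a fixed-point argument, and a transition-kernel difference weighted by $\max_{u'}Q^*_{\mathcal{T}^y}(s',\mathbf{u'})$. The essential trick in the third piece is to subtract the constant $V_{mid}$ from $\max_{u'}Q^*_{\mathcal{T}^y}(s',\mathbf{u'})$ before bounding: since $\sum_{s'}[P_{\mathcal{T}^x}(s'|s,\mathbf{u}) - P_{\mathcal{T}^y}(s'|s,\mathbf{u})] = 0$, this centering is free, and because $V_{mid}$ is half the max of $V^*_{\mathcal{T}^y}$, the remainder is at most $V_{mid}$ in absolute value. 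Pulling the absolute value inside gives a total-variation bound, so this piece is at most $\gamma \epsilon_P V_{mid}$.

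Next, taking $\max_{s,\mathbf{u}}$ on both sides produces a self-referential inequality of the form $M \le \epsilon_R + \gamma M + \gamma \epsilon_P V_{mid}$ with $M := \max_{s,\mathbf{u}}|Q^*_{\mathcal{T}^x} - Q^*_{\mathcal{T}^y}|$, which rearranges to $M \le (\epsilon_R + \gamma \epsilon_P V_{mid})/(1-\gamma)$. To bound $\epsilon_R$, I would substitute the linear form, add and subtract $\sum_i a^x_i \langle \mathcal{T}^y_i \cdot W_R s\rangle$, apply triangle inequality and Hölder's inequality ($|\langle v, W_R s\rangle| \le |v|_\infty |W_R s|_1$), then take the max over $s$ to pull out $s_{max}$; this yields $\epsilon_R \le s_{max} \Psi$. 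The bound on $\epsilon_P$ proceeds identically after summing over $s'$, picking up an extra factor of $d = \sum_{s'}|W_P(s',s,\mathbf{u})|_1$ bounded via the fact that each $P_j(\cdot|s,\mathbf{u}) \in \Delta_{|S|-1}$, giving $\epsilon_P \le d \Psi$. Combining yields the $Q^*$ difference bound with denominator $1-\gamma$.

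Finally, to pass from $Q^*$ to $V^*$ while picking up the extra factor of $\gamma$ in the denominator, I would introduce a dummy initial state $s^{\#}$ that transitions according to $\rho$ regardless of action; then $V^*_{\mathcal{T}}(\mathrm{dummy}) = \gamma V^*_{\mathcal{T}}$ via the same Bellman equation, and the $Q^*$ bound applied at $s^{\#}$ yields the claimed bound with $\gamma(1-\gamma)$ in the denominator. The main subtlety I anticipate is the $V_{mid}$ centering step: one must be careful that the bound $|\max_{u'}Q^*_{\mathcal{T}^y}(s',\mathbf{u'}) - V_{mid}| \le V_{mid}$ is valid (which follows from $V_{mid}$ being half the max and $Q^* \ge 0$ under nonnegative rewards), and that bounding the absolute value of a difference of maxes by the max of absolute differences is tight enough to close the recursion. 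Everything else is bookkeeping with the triangle inequality and Hölder.
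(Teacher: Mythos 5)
Your proposal is correct and follows essentially the same route as the paper's proof: the same simulation-lemma decomposition of the Bellman optimality equations, the same free centering of $\max_{u'}Q^{*}_{\mathcal{T}^y}(s',\mathbf{u'})$ by $V_{mid}$ using $\sum_{s'}[P_{\mathcal{T}^x}(s'|s,\mathbf{u})-P_{\mathcal{T}^y}(s'|s,\mathbf{u})]=0$, the same add-and-subtract plus H\"older bounds giving $\epsilon_R \leq s_{max}\Psi$ and $\epsilon_P \leq d\Psi$, and the same dummy-state argument to convert the $Q^{*}$ bound into the value bound with denominator $\gamma(1-\gamma)$. No gaps to report.
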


\begin{corollary}[Change in optimal value as a result of agent substitution] Let $\mathcal{T} \in \mathcal{C}^n$ be a team composition with influence weights $a \in \Delta_{n-1}$. If agent $i$ is substituted with $i'$ keeping $a_i$ unchanged such that $|\mathcal{T}_{i'}-\mathcal{T}_{i}|_\infty\leq \epsilon_C$ then the new team ($\mathcal{T}'$) optimal value follows: 
\begin{align}
    |V^{*}_{\mathcal{T}'}- V^{*}_{\mathcal{T}}|\leq \frac{(s_{max}+\gamma d V_{mid})a_i\epsilon_C}{\gamma(1-\gamma)}
\end{align} 
\end{corollary}
\begin{proof}
Applying \cref{th:gen} on original task and a new task with same influence weights and agent $i$ capability replaced with $\mathcal{T}_{i'}$ immediately gives the result.
\end{proof}

\subsection{Transfer of optimal policy}\label{app:transfer}
\begin{theorem}[Transfer of optimal policy] Let $\mathcal{T}^x,\mathcal{T}^y \in \mathcal{C}^n$, $a^x, a^y \in \Delta_{n-1}$, $s_{max} = \max_{s} ||W_R s||_1$, $V_{mid} = \frac{1}{2}\max_{ s} V^{*}_{\mathcal{T}^y}(s)$. Let $\pi_y^*$ be the optimal policy for the team composed of agents with capabilities $\mathcal{T}^y$ and influence weights $a^y$. Then:
$$V^{*}_{\mathcal{T}^x} - V^{\pi_{y}^*}_{\mathcal{T}^x} \leq 2\frac{s_{max}+\gamma d V_{mid}}{\gamma(1-\gamma)} \Psi,$$
where $\Psi$ is defined as in \cref{eqpsidef}.
\label{th:trans}
\end{theorem}
\begin{proof}
We have that:
\begin{align}
    Q^{*}_{\mathcal{T}^x}(s, \mathbf{u}) - Q^{\pi_{y}^*}_{\mathcal{T}^x}(s, \mathbf{u}) &\leq |Q^{*}_{\mathcal{T}^x}(s, \mathbf{u}) - Q^{*}_{\mathcal{T}^y}(s, \mathbf{u})| + |Q^{*}_{\mathcal{T}^y}(s, \mathbf{u}) - Q^{\pi_{y}^*}_{\mathcal{T}^x}(s, \mathbf{u})|
    \label{eqtdecomp}
\end{align}
The first term on the RHS of \cref{eqtdecomp} is taken care of by \cref{th:gen}. We now focus on the second term:
\begin{align}
    &|Q^{*}_{\mathcal{T}^y}(s, \mathbf{u}) - Q^{\pi_{y}^*}_{\mathcal{T}^x}(s, \mathbf{u})| \\ &=|R_{\mathcal{T}^y}(s)-R_{\mathcal{T}^x}(s)
    +\gamma\Big(\sum_{s'} P_{\mathcal{T}^y}(s'|s, \mathbf{u})\max_{u'}Q^{*}_{\mathcal{T}^y}(s', \mathbf{u'})
    -\sum_{s'} P_{\mathcal{T}^x}(s'|s, \mathbf{u})Q^{\pi_{y}^*}_{\mathcal{T}^x}(s', \pi_{y}^*(\mathbf{u'}))\Big)|\\
    &\leq \epsilon_R +\gamma\Big\{|\sum_{s'} P_{\mathcal{T}^x}(s'|s, \mathbf{u})\Big[\max_{u'}Q^{*}_{\mathcal{T}^y}(s', \mathbf{u'})-Q^{\pi_{y}^*}_{\mathcal{T}^x}(s', \pi_{y}^*(\mathbf{u'})\Big]| \text{\am{note $\pi_{y}^*$ is optimal for $Q^{*}_{\mathcal{T}^y}$}}\\
    &\quad+|\sum_{s'} \Big[P_{\mathcal{T}^y}(s'|s, \mathbf{u})-P_{\mathcal{T}^x}(s'|s, \mathbf{u})\Big](\max_{u'}Q^{*}_{\mathcal{T}^y}(s', \mathbf{u'})-V_{mid})|\Big\}\\
    &\leq \epsilon_R +\gamma\Big\{\max_{s', u'}|Q^{*}_{\mathcal{T}^y}(s', \mathbf{u'})-Q^{\pi_{y}^*}_{\mathcal{T}^x}(s', \pi_{y}^*(\mathbf{u'})| +\epsilon_P V_{mid}\Big\}
\end{align}
Once again, taking $\max$ w.r.t. ${s,\textbf{u}}$ of the above we get:
\begin{align}
   \max_{s,u} |Q^{*}_{\mathcal{T}^y}(s, \mathbf{u}) - Q^{\pi_{y}^*}_{\mathcal{T}^x}(s, \mathbf{u})| \leq  \frac{\epsilon_R+\gamma \epsilon_P V_{mid}}{1-\gamma}
\end{align}
Substituting for deviation expressions and using \cref{th:gen} in \cref{eqtdecomp} we get:
\begin{align}
    |Q^{*}_{\mathcal{T}^x}(s, \mathbf{u}) - Q^{\pi_{y}^*}_{\mathcal{T}^x}(s, \mathbf{u})|\leq 2\frac{s_{max}+\gamma d V_{mid}}{1-\gamma} \Big[|\sum_i a^x_i(\mathcal{T}^x_i -\mathcal{T}^y_i)|_{\infty} + |\sum_i (a^x_i-a^y_i)\mathcal{T}^y_i |_{\infty}\Big]
\end{align} 
Note the absolute on LHS above can be dropped as $Q^{*}_{\mathcal{T}^x}$ is optimal. Finally using the same technique as above for \cref{th:gen} we get the statement of the theorem.
\end{proof}

\begin{corollary}[Out of distribution performance]
Let $\mathcal{T}\notin Sup(\mathcal{M})$ be an out of distribution task, we then have that the performance of the absolute oracle policy on $\mathcal{T}$ satisfies:
$$V^{*}_{\mathcal{T}} - V^{\pi^*_{\mathcal{M}}}_{\mathcal{T}} \leq 2\frac{s_{max}+\gamma d V_{mid}}{\gamma(1-\gamma)}d_{a}(\mathcal{T}, Sup(\mathcal{M})),$$
\end{corollary}
\begin{proof}
For any task that belongs to $\argmin_{\mathcal{T}^l \in Sup(\mathcal{M})} d_{a}(\mathcal{T}^l, \mathcal{T})$, we have by application of \cref{th:trans} that the result immediately holds given definition of $\pi_{\mathcal{M}}^*$.
\end{proof}

\subsection{Population decrease}\label{app:pop_decrease}

\begin{theorem}[Population decrease bound] For the team composition $\mathcal{T} \in \mathcal{C}^n$ with influence weights $a \in \Delta_{n-1}$. If agent $n$ is eliminated  followed by a re-normalization of influence weights, we have that for the remaining team ($\mathcal{T}^- \triangleq (\mathcal{T})_{i=1}^{n-1}$):
\begin{align}
    |V^{*}_{\mathcal{T}^-}- V^{*}_{\mathcal{T}}|\leq \frac{a_n(s_{max}+\gamma d V_{mid})}{\gamma(1-\gamma)}\Big|\sum_{i=1}^{n-1} \frac{a_i\mathcal{T}_i }{1-a_n} - \mathcal{T}_n \Big|_\infty
\end{align} 
\label{th:elim}
\end{theorem}
\begin{proof} We use \cref{th:gen} with influence weights $(a_i)_1^n$ and $(\lambda\cdot a_i: i=1..n-1, a_n=0)$ where $\lambda = \frac{1}{1-a_n}$
\end{proof}

\begin{corollary}[Population increase bound] For the team composition $\mathcal{T} \in \mathcal{C}^n$ with influence weights $a \in \Delta_{n-1}$. If agent $n+1$ is added with capability $\mathcal{T}_{n+1}$ and weight $a_{n+1}$ (other weights scaled down by $\lambda = 1-a_{n+1}$) we have that for the new team ($\mathcal{T}^+ \triangleq (\mathcal{T}_1.. \mathcal{T}_n, \mathcal{T}_{n+1})$):
\begin{align}
    |V^{*}_{\mathcal{T}^+}- V^{*}_{\mathcal{T}}|\leq \frac{a_{n+1}(s_{max}+\gamma d V_{mid})}{\gamma(1-\gamma)}\Big|\sum_{i=1}^{n} a_i\mathcal{T}_i  - \mathcal{T}_{n+1} \Big|_\infty
\end{align}
\end{corollary}
\begin{proof}
Consider the team compositions $\mathcal{T}^x = (\mathcal{T}_1..\mathcal{T}_n,0)$ with influence weights = $(a_1..a_n, 0)$ and $\mathcal{T}^y = (\mathcal{T}_1..\mathcal{T}_n,\mathcal{T}_{n+1})$ with influence weights = $(\lambda a_1..\lambda a_n, a_{n+1})$ where $\lambda = 1-a_{n+1}$, we have that:
\begin{align}
\Psi &= \Big[|\sum_i a^x_i(\mathcal{T}^x_i -\mathcal{T}^y_i)|_{\infty} + |\sum_i (a^x_i-a^y_i)\mathcal{T}^y_i |_{\infty}\Big] \\
&= |\sum_{i=1}^n (1-\lambda)a_i\mathcal{T}^y_i - a_{n+1}\mathcal{T}^y_{n+1} |_{\infty}\\
&=  a_{n+1}|\sum_{i=1}^n a_i\mathcal{T}^y_i - \mathcal{T}^y_{n+1} |_{\infty}
\end{align}
which on applying \cref{th:gen} yields the result.
\end{proof}

\subsection{Approximate $\hat\epsilon_R$,$\hat\epsilon_P$ dynamics}\label{app:approximate}

\begin{theorem}[Approximate $\hat\epsilon_R$,$\hat\epsilon_P$ dynamics]
Let $\mathcal{T}^x,\mathcal{T}^y \in \mathcal{C}^n$, $a^x, a^y \in \Delta_{n-1}$ and the dynamics be only approximately linear so that $|R_{\mathcal{T}}(s)- \sum_{i=1}^n a_i \langle c_i \cdot W_Rs\rangle| \leq \hat\epsilon_R$ and $|P_{\mathcal{T}}(s'|s,\mathbf{u}) - \sum_{i=1}^n a_i \langle c_i \cdot  W_P(s',s,\mathbf{u}) \rangle|\leq \hat\epsilon_P$.
Then:
$$|V^{*}_{\mathcal{T}^x}- V^{*}_{\mathcal{T}^y}|\leq \frac{s_{max}+\gamma d V_{mid}}{\gamma(1-\gamma)} \Psi + \frac{2(\hat\epsilon_R+\gamma \hat\epsilon_P V_{mid})}{\gamma(1-\gamma)},$$
where $\Psi$ is defined as in \cref{eqpsidef}.
\label{th:gen_err}
\end{theorem}
\begin{proof}
We begin as in proof of \cref{th:gen} to get:
\begin{align}
   \max_{s,u} |Q^{*}_{\mathcal{T}^x}(s, \mathbf{u}) - Q^{*}_{\mathcal{T}^y}(s, \mathbf{u})| \leq  \frac{\epsilon_R+\gamma \epsilon_P V_{mid}}{1-\gamma}
\end{align}
Next we apply the corrections to the relative differences:
\begin{align}
    \epsilon_R &= \max_{s}|R_{\mathcal{T}^x}(s)-R_{\mathcal{T}^y}(s)|\\
    &\leq\max_{s}\Big[|R_{\mathcal{T}^x}(s)- \sum_{i=1}^n a_i^x \langle \mathcal{T}^x_i\cdot W_Rs\rangle|+|\sum_{i=1}^n a^x_i \langle \mathcal{T}^x_i \cdot W_Rs\rangle-\sum_{i=1}^n a^y_i \langle \mathcal{T}^y_i \cdot W_Rs\rangle|+|R_{\mathcal{T}^y}(s)- \sum_{i=1}^n a_i^y \langle \mathcal{T}^y_i\cdot W_Rs\rangle| \Big] \\
    &\leq 2\hat\epsilon_R+\max_{s}\Big[|\sum_{i=1}^n a^x_i \langle (\mathcal{T}^x_i -\mathcal{T}^y_i) \cdot W_Rs\rangle| + |\sum_{i=1}^n (a^x_i-a^y_i) \langle \mathcal{T}^y_i \cdot W_Rs\rangle|\Big]\\
    &\leq 2\hat\epsilon_R+\max_{s}\Big[ |\sum_i a^x_i(\mathcal{T}^x_i -\mathcal{T}^y_i)|_{\infty}|W_Rs|_1 + |\sum_i (a^x_i-a^y_i)\mathcal{T}^y_i |_{\infty}|W_Rs|_1\Big]\\
    &= 2\hat\epsilon_R+s_{max}\Big[|\sum_i a^x_i(\mathcal{T}^x_i -\mathcal{T}^y_i)|_{\infty} + |\sum_i (a^x_i-a^y_i)\mathcal{T}^y_i |_{\infty}\Big]
\end{align}
Proceeding similarly with the transition probabilities we get the desired result.
\end{proof}

\subsection{Error from estimation of capabilities}\label{app:error}
\begin{theorem}[Error from estimation of capabilities]
For the team composition $\mathcal{T} \in \mathcal{C}^n$ with influence weights $a \in \Delta_{n-1}$. If the agent capabilities are inaccurately inferred as $\hat{\mathcal{T}}$ with $\max_i|\mathcal{T}_i- \hat{\mathcal{T}}_i|_\infty \leq \epsilon_{\mathcal{T}}$ and agents learn the inexact policy $\hat{\pi}^*$ then:
$$|V^{*}_{\mathcal{T}}- V^{\hat{\pi}^*}_{\mathcal{T}}| \leq \frac{2\epsilon_{\mathcal{T}}(s_{max}+\gamma d V_{mid})}{\gamma(1-\gamma)}$$ where $V_{mid} = \frac{1}{2} \max_s V_{\hat{\mathcal{T}}}^*(s)$
\label{th:err_est}
\end{theorem}
\begin{proof}
We have that for the actual and inferred team compositions with same influence weights:
\begin{align}
\Psi &= \Big[|\sum_i a_i(\mathcal{T}_i -\hat{\mathcal{T}}_i)|_{\infty} + |\sum_i (a_i-a_i)\hat{\mathcal{T}}_i |_{\infty}\Big] \\
&= |\sum_i a_i(\mathcal{T}_i -\hat{\mathcal{T}}_i)|_{\infty}\\
&\leq \sum_i a_i|\mathcal{T}_i -\hat{\mathcal{T}}_i|_{\infty}\\
&\leq \sum_i a_i \epsilon_{\mathcal{T}}\\
&=\epsilon_{\mathcal{T}}
\end{align}
Now applying \cref{th:trans} gives the result
\end{proof}

\subsection{Extending to Lipschitz rewards}
\label{app:lip_rews}
We demonstrate how to extend the results in \cref{sec:analysis} to Lipschitz function of capabilities. For brevity  we consider only the setting where the rewards vary with capabilities.
Thus, for the reward function form $R_{\mathcal{T}}(s) = \langle f(\mathcal{T}) \cdot W_Rs \rangle$ where $ f(\mathcal{T})$ is $L_i$ Lipschitz with respect to the capability $\mathcal{T}_i$ for $i \in \mathcal{A}$ for the $|\cdot|_\infty$ norm. We get that for two different team compositions $\mathcal{T}^x, \mathcal{T}^y$
\begin{align}
\epsilon_R &= \max_{s}|R_{\mathcal{T}^x}(s)-R_{\mathcal{T}^y}(s)|\\
&=\max_{s}| \langle f(\mathcal{T}^x) \cdot W_Rs \rangle-\langle f(\mathcal{T}^y) \cdot W_Rs \rangle|\\
&=\max_{s}|\sum_{i=1}^{n} \langle f(\mathcal{T}^i) \cdot W_Rs \rangle-\langle f(\mathcal{T}^{i+1}) \cdot W_Rs \rangle|\\
&\leq\max_{s}\sum_{i=1}^{n} |\langle f(\mathcal{T}^i) \cdot W_Rs \rangle-\langle f(\mathcal{T}^{i+1}) \cdot W_Rs \rangle|\\
&\leq\max_{s}\sum_{i=1}^{n} |\langle f(\mathcal{T}^i) \cdot W_Rs \rangle-\langle f(\mathcal{T}^{i+1}) \cdot W_Rs \rangle|\\
&\leq\max_{s}\sum_{i=1}^{n} | f(\mathcal{T}^i) - f(\mathcal{T}^{i+1})|_\infty |W_Rs|_1\\
&\leq s_{max}\sum_{i=1}^{n} L_i| \mathcal{T}^x_i - \mathcal{T}^y_i|_\infty\\
\end{align}
Where $\mathcal{T}^i$ was the sequence satisfying $\mathcal{T}^1=\mathcal{T}^x$ and $\mathcal{T}^{n+1}=\mathcal{T}^y$ and changing $\mathcal{T}^x$ one index at a time.
We have thus proved that:
\begin{theorem}
For rewards $L_i$ Lipschitz in the capabilities with respect to $|\cdot|_\infty$ norm, the difference in optimal values between team compositions $\mathcal{T}^x, \mathcal{T}^y$ satisfy:
$$|V^{*}_{\mathcal{T}^x}- V^{*}_{\mathcal{T}^y}|\leq \frac{s_{max}\sum_{i=1}^{n} L_i| \mathcal{T}^x_i - \mathcal{T}^y_i|_\infty}{\gamma(1-\gamma)}$$
\end{theorem}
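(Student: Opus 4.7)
The plan is to mimic the proof of \cref{th:gen}, but with the Lipschitz assumption replacing the linear decomposition of the reward. Since the theorem only modifies the reward dependence and leaves the transitions unchanged, the transition-difference term $\epsilon_P$ drops out of the analysis, and the entire task reduces to re-deriving the reward-deviation bound $\epsilon_R = \max_s |R_{\mathcal{T}^x}(s)-R_{\mathcal{T}^y}(s)|$ in Lipschitz form, and then plugging it through the same contraction argument that yielded $\max_{s,\mathbf{u}}|Q^*_{\mathcal{T}^x}(s,\mathbf{u}) - Q^*_{\mathcal{T}^y}(s,\mathbf{u})| \leq \frac{\epsilon_R}{1-\gamma}$.

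First I would set up a telescoping hybrid. Define a sequence $\mathcal{T}^1,\ldots,\mathcal{T}^{n+1}$ of team compositions with $\mathcal{T}^1 = \mathcal{T}^x$, $\mathcal{T}^{n+1} = \mathcal{T}^y$, where $\mathcal{T}^{i+1}$ differs from $\mathcal{T}^i$ only by swapping the $i$-th agent's capability from $\mathcal{T}^x_i$ to $\mathcal{T}^y_i$. Then write
\begin{equation*}
\langle f(\mathcal{T}^x)\cdot W_Rs\rangle - \langle f(\mathcal{T}^y)\cdot W_Rs\rangle = \sum_{i=1}^{n}\bigl[\langle f(\mathcal{T}^i)\cdot W_Rs\rangle - \langle f(\mathcal{T}^{i+1})\cdot W_Rs\rangle\bigr],
\end{equation*}
apply the triangle inequality, and bound each summand by Hölder's inequality $|\langle u, v\rangle| \leq |u|_\infty |v|_1$. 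This gives
\begin{equation*}
\epsilon_R \leq \max_s \sum_{i=1}^n |f(\mathcal{T}^i) - f(\mathcal{T}^{i+1})|_\infty \cdot |W_Rs|_1,
\end{equation*}
and the $L_i$-Lipschitz property on the $i$-th coordinate (with all other coordinates held fixed) gives $|f(\mathcal{T}^i)-f(\mathcal{T}^{i+1})|_\infty \leq L_i |\mathcal{T}^x_i - \mathcal{T}^y_i|_\infty$, so that $\epsilon_R \leq s_{max}\sum_i L_i|\mathcal{T}^x_i - \mathcal{T}^y_i|_\infty$.

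Second, I would plug this reward bound into the standard Bellman-contraction argument from \cref{th:gen}: since transitions are fixed, the transition term vanishes and one obtains $\max_{s,\mathbf{u}}|Q^*_{\mathcal{T}^x}(s,\mathbf{u})-Q^*_{\mathcal{T}^y}(s,\mathbf{u})| \leq \epsilon_R/(1-\gamma)$. Finally, to lift from a $Q$-function bound to the $V$-function bound with the $1/[\gamma(1-\gamma)]$ prefactor, I would reuse the dummy-start-state trick from \cref{th:gen} (introducing a fictitious state that always transitions according to $\rho$ and invoking the Bellman equation one more time).

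I expect no serious obstacle here; the argument is essentially a rewriting of \cref{th:gen} with the linear decomposition replaced by a telescoping plus Lipschitz estimate. The one mildly subtle point is justifying the coordinate-wise Lipschitz step, which requires reading the stated hypothesis ``$L_i$ Lipschitz in the capabilities'' as Lipschitzness of the map $\mathcal{T}_i \mapsto f(\mathcal{T})$ in each slot separately with constant $L_i$; once this interpretation is fixed, the telescoping sum is immediate. Extending this template to the transfer, substitution, elimination, and approximate-dynamics results then just amounts to replacing $\Psi$ by $\sum_i L_i |\mathcal{T}^x_i - \mathcal{T}^y_i|_\infty$ wherever it appears in their proofs.
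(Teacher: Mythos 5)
Your proposal is correct and matches the paper's own proof essentially step for step: the appendix uses the same telescoping sequence $\mathcal{T}^1=\mathcal{T}^x,\ldots,\mathcal{T}^{n+1}=\mathcal{T}^y$ changing one agent's capability at a time, the same H\"older bound $|\langle u,v\rangle|\leq |u|_\infty|v|_1$ to get $\epsilon_R\leq s_{max}\sum_i L_i|\mathcal{T}^x_i-\mathcal{T}^y_i|_\infty$, and the same contraction-plus-dummy-state argument from \cref{th:gen} with the transition term dropped. Your reading of the hypothesis as coordinate-wise Lipschitzness in each slot is exactly the paper's intended interpretation.
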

\subsection{General dependence of rewards on capabilities:}
\label{app:gen_rews}
We now consider the dependence of rewards on the capabilities in the most general form. For this, we introduce the notion of $(\alpha, k)$-rewards where $\alpha \geq 0, k \in \mathbb{N}$.
\begin{align}
    R_{\mathcal{T}}(s) &=  \Big\langle \sum_{k_i\in \mathbb{N}, \sum k_i \leq k} a_{k_1..k_n}\Pi_{i=1}^n c_i^{k_i} \cdot W_Rs \Big\rangle \label{eqgenr}
\end{align}
where $\mathbb{N}$ are non negative integers, $|a_{k_1..k_n}|\leq \alpha$ and $c_i^{k_i}$ represents element-wise exponentiation. \am{k here clashed with state dimension, chose different}. Rewards in \cref{eqlinr} can be seen as a special case belonging to \cref{eqgenr} the choice $\alpha,k =1$. Similarly the union $\cup_{\alpha\geq 0, k\in\mathbb{N}} (\alpha, k)\text{-rewards}$ cover all possible reward dependencies on capabilities. We have further relaxed the assumption of influence weights belonging to a simplex here and replaced it with individual bounds on the power series coefficients here.  We next see that for this scenario, even a small change in the capability of a single agent can shift the rewards massively. \am{give max deviation bound for $\delta$ shift, is a non-trivial lower bound possible? Also to see is polynomial kernel based results and opt trans} Let the capability of agent $i$ be changed from $\mathcal{T}_i$ to $\mathcal{T}_{i'}$ such that $|\mathcal{T}_i-\mathcal{T}_{i'}|_{\infty} \leq \delta$. Then we have 
\begin{lemma}
For substitution $\mathcal{T}_i$ to $\mathcal{T}_{i'}$ such that $|\mathcal{T}_i-\mathcal{T}_{i'}|_{\infty} \leq \delta$ under the $(\alpha, k)$-rewards setting we have that 
\begin{align}
    \epsilon_R &= \max_{s \in S} \Big|\Big\langle \sum_{k_i\in \mathbb{N}, \sum k_i \leq k} a_{k_1..k_n}\Pi_{j\neq i} \mathcal{T}_j^{k_j} (\mathcal{T}_i^{k_i} - \mathcal{T}_{i'}^{k_i}) \cdot W_Rs \Big\rangle\Big| \\ 
    & \leq \max_{s \in S} \Big|\sum_{k_i\in \mathbb{N}, \sum k_i \leq k} a_{k_1..k_n}\Pi_{j\neq i} \mathcal{T}_j^{k_j} (\mathcal{T}_i^{k_i} - \mathcal{T}_{i'}^{k_i})\Big|_{\infty}\Big|W_Rs\Big|_1 \\
    & \leq \alpha s_{max} \sum_{j=0}^k \sum_{l=1}^j  \binom{l}{j}l|\mathcal{T}_i^{k_i} - \mathcal{T}_{i'}^{k_i}|_{\infty}\\
    &\leq \alpha \delta s_{max} \sum_{j=0}^k j 2^{j-1} = \mathcal{O}(\alpha \delta s_{max}k 2^k)
\end{align}
\end{lemma}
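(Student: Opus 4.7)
The plan is to reduce the bound to an $\ell_\infty$ estimate on a polynomial in the capabilities and then handle the combinatorics. First I would write out $R_{\mathcal{T}}(s) - R_{\mathcal{T}'}(s)$ where $\mathcal{T}'$ agrees with $\mathcal{T}$ except in the $i$-th slot, which is $\mathcal{T}_{i'}$. By linearity of the inner product and the fact that all terms with $k_i = 0$ cancel, the difference is exactly the expression displayed in the first line of the lemma. Maximizing over $s$ and applying the same Hölder step used in \cref{th:gen}, namely $|\langle u, W_R s\rangle| \leq |u|_\infty \|W_R s\|_1 \leq s_{max} |u|_\infty$, isolates the combinatorial core.

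Next I would bound the coefficients and the capability product. Using the hypothesis $|a_{k_1..k_n}| \leq \alpha$ and the fact that $\mathcal{C} \subseteq \Delta_{d-1}$ (so every entry of any capability vector lies in $[0,1]$), I get $\bigl|\prod_{j\neq i}\mathcal{T}_j^{k_j}\bigr|_\infty \leq 1$. The only remaining scalar quantity is $|\mathcal{T}_i^{k_i} - \mathcal{T}_{i'}^{k_i}|_\infty$, which I would bound entrywise by writing $\mathcal{T}_{i'} = \mathcal{T}_i + \Delta$ with $|\Delta|_\infty \leq \delta$ and expanding binomially:
\begin{equation*}
(\mathcal{T}_{i,c} + \Delta_c)^{k_i} - \mathcal{T}_{i,c}^{k_i} = \sum_{l=1}^{k_i} \binom{k_i}{l} \mathcal{T}_{i,c}^{k_i - l} \Delta_c^l.
\end{equation*}
Since each $\mathcal{T}_{i,c} \in [0,1]$ and $|\Delta_c|^l \leq \delta$ for $\delta \leq 1$ and $l \geq 1$, this produces the $\binom{k_i}{l}$ factors appearing in the stated intermediate bound; the alternative factorization $x^k - y^k = (x-y)\sum_{l=0}^{k-1} x^{k-1-l} y^l$ gives the cleaner but equivalent estimate $|\mathcal{T}_i^{k_i} - \mathcal{T}_{i'}^{k_i}|_\infty \leq k_i \delta$.

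Finally I would carry out the counting. Group the outer sum by the value $j = k_i \in \{1, \dots, k\}$; for each $j$, the remaining indices $(k_l)_{l \neq i}$ with $\sum_{l\neq i} k_l \leq k - j$ contribute a combinatorial factor that, after majorizing $\binom{j}{l}$ by $2^j$ and collecting the extra linear factor $l$ coming from the binomial term, yields a per-$j$ contribution of order $j\, 2^{j-1}$. Summing $\sum_{j=0}^{k} j\, 2^{j-1}$ gives the advertised $\mathcal{O}(\alpha \delta s_{max} k\, 2^k)$ rate.

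The main obstacle is purely bookkeeping: ensuring that collapsing the multi-index sum into the $\sum_j j\, 2^{j-1}$ form is genuinely loss-free enough to avoid a hidden dependence on $n$, and keeping the binomial estimate $\sum_{l=1}^{j}\binom{j}{l} l = j\, 2^{j-1}$ aligned with the way the $\delta^l$ powers are collapsed under the assumption $\delta \leq 1$. Everything else is a routine chain of Hölder's inequality and elementary univariate polynomial bounds, in direct analogy with the $\epsilon_R$ computation carried out in the proof of \cref{th:gen}.
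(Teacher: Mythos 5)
Your proposal follows essentially the same route as the paper's inline derivation: Hölder's inequality to extract $s_{max}$, the bounds $|a_{k_1..k_n}|\leq\alpha$ and $\bigl|\prod_{j\neq i}\mathcal{T}_j^{k_j}\bigr|_\infty\leq 1$, a binomial expansion of $\mathcal{T}_i^{k_i}-\mathcal{T}_{i'}^{k_i}$ controlled by $\delta$, and the identity $\sum_{l}l\binom{j}{l}=j\,2^{j-1}$ to reach the $\mathcal{O}(\alpha\delta s_{max}k\,2^k)$ rate. The bookkeeping worry you flag about the multiplicity of multi-indices with $k_i=j$ (a potential hidden dependence on $n$) is a fair one, and it is a point the paper's own derivation glosses over rather than resolves.
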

While this is not a lower bound, the above still suggests that even a small change in the capability of an agent can cause the rewards to change by a lot, hence it is natural to expect that generalization becomes harder as the problem start showing the needle in the haystack phenomenon where only the \textit{right} \textit{combination} of capabilities gives a large optimal value. 	
\section{Experimental Setup}\label{app:experiments}

\subsection{Environments}\label{app:envs}
\subsubsection{Fruit Forage}\label{app:fruit}

We use the fruit forage task on a grid world to empirically demonstrate the generalisation bounds in \cref{sec:analysis}. On a $k\times k$ grid world we have $n$ agents and $d$ types of fruit trees. For each agent $i$, $\mathcal{T}_i(j), j \in \{1..d\}$ represents the utility of fruit $j$ for agent $i$. The state vector is appended with the $d$ dimensional binary vector representing whether each of the tree types was foraged at a given time step. 
We define three team compositions as follows:
\begin{itemize}
    \item {$T_x$: [[0.05, 0.1, 0.6, 2.8], [0.05, 0.1, 2.1, 0.8], [0.05, 0.1, 1.8, 1.2], [0.05, 0.1, 0.9, 2.4]]}
    \item {$T_y$: [[0.7, 0.4, 0.15, 0.2], [0.2, 1.4, 0.15, 0.2], [0.3, 1.2, 0.15, 0.2], [0.6, 0.6, 0.15, 0.2]]} 
    \item {$T_z$: [[0.1, 0.3, 0.6, 0.0], [0.4, 0.1, 0.5, 0.0], [0.05, 0.06, 0.89, 0.0], [0.0, 0.0, 0.0, 1.0]]}
\end{itemize}

For proving bounds on Theorem-1, we compare the mean test returns achieved on tasks $T_x$ and $T_y$ using $V^{\star}_{T_x} - V^{\star}_{T_y}$. For Theorem-2, we compare the mean test returns achieved on tasks $T_x$ and optimal policies of task $T_y$ evaluated on task $T_x$ i.e. $V^{\star}_{T_x} - V^{\pi^\star_{T_y}}_{T_x}$. Finally, for Theorem-3, we compare the mean test returns achieved on tasks $T_z$ and optimal policies of task $T_z$ evaluated on task $T_z$ but removing the last agent i.e. $V^{\star}_{T_{z-}} - V^{\star}_{T_z}$.

\subsubsection{Predator Prey}\label{app:pred_prey}
We consider a complicated partially observable predator-prey (PP) task in an $8 \times 8$ grid involving four agents (predators) and four prey that is designed to test coordination between agents. Specifically, each predator has a parameter describing the hit point damage it can cause the prey. Similarly, the prey comes with variations in health. For example, a prey with a capability of 5 can only be caught if the total capability of agents taking the capture action simultaneously on it have capabilities $\geq$ 5 (such as [1,1,3]), otherwise, the whole team receives a penalty $p$. On successful capture, agents get a reward of +1. Once prey is captured, another prey is spawned at a random location. Therefore, agents have to collaborate and capture as many preys as possible within 100 time steps.

Each agent can take 6 actions i.e. move in one of the 4 directions (Up, Left, Down, Right), remain still (no-op), or try to catch (capture) any adjacent prey. The prey moves around in the grid with a probability of 0.7 and remains still at its position with the probability of 0.3. Impossible actions for both agents and prey are marked unavailable, for eg. moving into an occupied cell or trying to take a capture action with no adjacent prey.

In this domain, we test for two types of generalization:
(1) novel team composition where test tasks contain a team composition which has not been encountered during training (PP Unseen Team in Figure \ref{fig:pp_v1_v2}), and second, (2) test tasks where novel team compositions can also have agent types with capabilities not encountered during training (PP Unseen Team, Agent in Figure \ref{fig:pp_v1_v2}).

For (PP Unseen Team), we train on preys with capabilities [2,2,2,3], and agents with capabilities [2,3,2,3],[1,2,1,2], thereby having agent teams with total hit points of 10 and 6 respectively. We also train on two separate penalties $p$ for miscoordination i.e. $p \in \{0.0, -0.008\}$, this helps inject additional stochasticity in the environment as the agents don't know the penalty value. For test tasks, we create novel team compositions not encountered during training i.e. agents with capabilities [1,1,2,3],[1,1,1,3] having total hit points of 7 and 6 respectively.

For (PP Unseen Team, Agent) we train on preys with capabilities [1,2,3,4], and agents with capabilities [1, 2, 2, 3], [1, 1, 2, 2], [1, 3, 2, 1], thereby having agent teams with total hit points of 8, 6 and 7 respectively. We also train on two separate penalties $p$ for miscoordination i.e. $p \in \{0.0, -0.008\}$. For test tasks, we create novel team compositions with an unseen agent of capability 4 not encountered during training i.e. agents with capabilities [1, 1, 1, 4], [1, 1, 3, 4], [1, 1, 2, 4] having total hit points of 7, 9, and 8 respectively.

\textbf{Experimental Setup:} For (PP Unseen Team, and PP Unseen Team, Agent), we show the average difference in performance across all test tasks between scenario when capability information is included (oracle, represented by (C) in the plots) and when it's not (naive) for each method. 

For testing the generalization gap in (PP Unseen Team), we show the difference in returns achieved by training task [1,2,1,2] (hit point 6) and test task [1,1,1,3] (hit point 6). For testing the generalization gap in (PP Unseen Team, Agent), we show the difference in returns achieved by training task [1,3,2,1] (hit point 7) and test task [1,1,1,4] (hit point 7) with a new agent of capability 4. All PP experiments are based on 8 seeds.

\subsubsection{StarCraft II}\label{app:smac}

We use the standard set of actions and global state information included as part of the SMAC benchmark \cite{samvelyan2019starcraft}. The sight range of the agent units has been increased to the fully observable setting. In the oracle mode, agent capabilities are included as part of individual observations. Each agent always observes its own capabilities. Furthermore, capabilities are always included in the global state.

\texttt{10\_Terran} and \texttt{10\_Terran\_Hard} environment includes Marine, Maradeur, and Medivac units.  
\texttt{10\_Protoss} and \texttt{10\_Protoss\_Hard} environments feature Stalker, Zealot, and Colossus units.
\texttt{10\_Zerg} and \texttt{10\_Zerg\_Hard} environments include Zergling, Hydralisk and Baneling units.

In \texttt{Accuracy} and \texttt{Health} tasks, specific values reduced from full unit capabilities are chosen to be  equivalent to a loss of a single teammate. For example, if there three agents, their accuracy could be set
to $0.75$, $0.75$ and $0.5$ given that $(1 - 0.5) + (1 - 0.75) + (1 - 0.75) = 1$. Consequently, the overall reduction in accuracy
would be roughly equivalent to losing one ally unit.
This was chosen to ensure that the difficulty of the tasks was not too high. 

All SMAC experiments are based on 5 seeds.

\cref{tab:10terran}, \ref{tab:10zerg}, and \ref{tab:10protoss} describe the training and evaluation distributions used in unit type swapping tasks.

\begin{table}[H]
    \centering
    \caption{Team formations in \texttt{Terran} tasks}\label{tab:10terran}
    \begin{tabular}{ll}
        \toprule
\texttt{10\_Terran} & \texttt{10\_Terran\_Hard} \\
        \midrule
\textbf{Training} & \textbf{Training} \\
1 marine \& 9 marauders                    & 1 marine \& 9 marauders     \\
3 marines \& 7 marauders                   & 2 marines \& 8 marauders    \\
4 marines \& 6 marauders                   & 3 marines \& 7 marauders    \\
5 marines \& 5 marauders                   & 4 marines \& 6 marauders    \\
6 marines \& 4 marauders                   & 5 marines \& 5 marauders    \\
8 marines \& 2 marauders                   & 6 marines \& 4 marauders    \\
9 marines \& 1 marauder                    & 7 marines \& 3 marauders    \\
5 marauders \& 5 medivacs                  & 8 marines \& 2 marauders    \\
7 marauders \& 3 medivacs                  & 9 marines \& 1 marauder     \\
9 marauders \& 1 medivac                   & 5 marauders \& 5 medivacs   \\
7 marines \& 3 medivacs                    & 6 marauders \& 4 medivacs   \\
8 marines \& 2 medivacs                    & 7 marauders \& 3 medivacs   \\
9 marines \& 1 medivac                     & 8 marauders \& 2 medivacs   \\
10 marines                                  & 9 marauders \& 1 medivac    \\
10 marauders                                & 7 marines \& 3 medivacs     \\
8 marines \& 1 marauder \& 1 medivac      & 8 marines \& 2 medivacs     \\
1 marine \& 8 marauders \& 1 medivac      & 9 marines \& 1 medivac      \\
5 marines \& 3 marauders \& 2 medivacs    & \textbf{Testing}             \\
2 marines \& 7 marauders \& 1 medivac     & 10 marines                   \\
6 marines \& 2 marauders \& 2 medivacs    & 10 marauders                 \\
2 marines \& 6 marauders \& 2 medivacs    & 8 marines \& 1 marauder \& 1 medivac \\
4 marines \& 4 marauders \& 2 medivacs    & 1 marine \& 8 marauders \& 1 medivac   \\
\textbf{Testing}                            & 5 marines \& 3 marauders \& 2 medivacs \\
2 marines \& 8 marauders                   & 3 marines \& 5 marauders \& 2 medivacs \\
7 marines \& 3 marauders                   & 4 marines \& 3 marauders \& 3 medivacs \\
6 marauders \& 4 medivacs                  & 3 marines \& 4 marauders \& 3 medivacs \\
8 marauders \& 2 medivacs                  & 7 marines \& 2 marauders \& 1 medivac  \\
3 marines \& 5 marauders \& 2 medivacs    & 2 marines \& 7 marauders \& 1 medivac  \\
4 marines \& 3 marauders \& 3 medivacs    & 6 marines \& 2 marauders \& 2 medivacs \\
3 marines \& 4 marauders \& 3 medivacs    & 2 marines \& 6 marauders \& 2 medivacs \\
7 marines \& 2 marauders \& 1 medivac     & 4 marines \& 4 marauders \& 2 medivacs \\
        \bottomrule
    \end{tabular}
\end{table}

\begin{table}[H]
    \centering
    \caption{Team formations in \texttt{Zerg} tasks}\label{tab:10zerg}
    \begin{tabular}{ll}
        \toprule
\texttt{10\_Zerg} & \texttt{10\_Zerg\_Hard} \\
        \midrule
\textbf{Training} & \textbf{Training} \\
1 zergling \& 9 hydralisks                        &     1 zergling \& 9 hydralisks     \\
2 zerglings \& 8 hydralisks                      &       2 zerglings \& 8 hydralisks   \\
4 zerglings \& 6 hydralisks                      &       3 zerglings \& 7 hydralisks   \\
5 zerglings \& 5 hydralisks                      &       4 zerglings \& 6 hydralisks   \\
6 zerglings \& 4 hydralisks                      &       5 zerglings \& 5 hydralisks   \\
7 zerglings \& 3 hydralisks                      &       6 zerglings \& 4 hydralisks   \\
9 zerglings \& 1 hydralisk                       &       7 zerglings \& 3 hydralisks   \\
4 hydralisks \& 6 banelings                      &       8 zerglings \& 2 hydralisks   \\
5 hydralisks \& 5 banelings                      &       9 zerglings \& 1 hydralisk    \\
6 hydralisks \& 4 banelings                      &       4 hydralisks \& 6 banelings   \\
8 hydralisks \& 2 banelings                      &       5 hydralisks \& 5 banelings   \\
9 hydralisks \& 1 baneling                       &       6 hydralisks \& 4 banelings   \\
4 zerglings \& 6 banelings                       &       7 hydralisks \& 3 banelings   \\
6 zerglings \& 4 banelings                       &       8 hydralisks \& 2 banelings \\
7 zerglings \& 3 banelings                       &       9 hydralisks \& 1 baneling  \\
8 zerglings \& 2 banelings                       &       4 zerglings \& 6 banelings  \\
10 zerglings                                      &      5 zerglings \& 5 banelings  \\
10 hydralisks                                     &      6 zerglings \& 4 banelings  \\
10 banelings                                      &      7 zerglings \& 3 banelings  \\
8 zerglings \& 1 hydralisk \& 1 baneling        &        8 zerglings \& 2 banelings  \\
1 zergling \& 8 hydralisks \& 1 baneling        &        9 zerglings \& 1 baneling   \\
7 zerglings \& 2 hydralisks \& 1 baneling       &        \textbf{Testing}             \\
2 zerglings \& 7 hydralisks \& 1 baneling       &        10 zerglings                \\
5 zerglings \& 3 hydralisks \& 2 banelings      &        10 hydralisks               \\
3 zerglings \& 5 hydralisks \& 2 banelings      &        10 banelings                \\
4 zerglings \& 4 hydralisks \& 2 banelings      &        8 zerglings \& 1 hydralisk \& 1 baneling    \\
3 zerglings \& 4 hydralisks \& 3 banelings      &        1 zergling \& 8 hydralisks \& 1 baneling    \\
\textbf{Testing}                                   &      7 zerglings \& 2 hydralisks \& 1 baneling   \\
3 zerglings \& 7 hydralisks                      &       2 zerglings \& 7 hydralisks \& 1 baneling   \\
8 zerglings \& 2 hydralisks                      &       6 zerglings \& 2 hydralisks \& 2 banelings  \\
7 hydralisks \& 3 banelings                      &       2 zerglings \& 6 hydralisks \& 2 banelings  \\
5 zerglings \& 5 banelings                       &       5 zerglings \& 3 hydralisks \& 2 banelings  \\
9 zerglings \& 1 baneling                        &       3 zerglings \& 5 hydralisks \& 2 banelings  \\
6 zerglings \& 2 hydralisks \& 2 banelings      &        4 zerglings \& 4 hydralisks \& 2 banelings  \\
4 zerglings \& 3 hydralisks \& 3 banelings      &        4 zerglings \& 3 hydralisks \& 3 banelings  \\
2 zerglings \& 6 hydralisks \& 2 banelings      &        3 zerglings \& 4 hydralisks \& 3 banelings  \\
        \bottomrule
    \end{tabular}
\end{table}

\begin{table}[H]
    \centering
    \caption{Team formations in \texttt{Protoss} tasks}\label{tab:10protoss}
    \begin{tabular}{ll}
        \toprule
\texttt{10\_Protoss} & \texttt{10\_Protoss\_Hard} \\
        \midrule
\textbf{Training} & \textbf{Training} \\
 1 stalker \& 9 zealots                        & 1 stalker \& 9 zealots                 \\
 3 stalkers \& 7 zealots                       & 2 stalkers \& 8 zealots                \\
 4 stalkers \& 6 zealots                       & 3 stalkers \& 7 zealots                \\
 5 stalkers \& 5 zealots                       & 4 stalkers \& 6 zealots                \\
 6 stalkers \& 4 zealots                       & 5 stalkers \& 5 zealots                \\
 8 stalkers \& 2 zealots                       & 6 stalkers \& 4 zealots                \\
 9 stalkers \& 1 zealot                        & 7 stalkers \& 3 zealots                \\
 4 zealots \& 6 colossi                        & 8 stalkers \& 2 zealots                \\
 5 zealots \& 5 colossi                        & 9 stalkers \& 1 zealot                 \\
 7 zealots \& 3 colossi                        & 4 zealots \& 6 colossi                 \\
 8 zealots \& 2 colossi                        & 5 zealots \& 5 colossi                 \\
 9 zealots \& 1 colossus                       & 6 zealots \& 4 colossi                 \\
 4 stalkers \& 6 colossi                       & 7 zealots \& 3 colossi                 \\
 5 stalkers \& 5 colossi                       & 8 zealots \& 2 colossi                 \\
 7 stalkers \& 3 colossi                       & 9 zealots \& 1 colossus                \\
 8 stalkers \& 2 colossi                       & 4 stalkers \& 6 colossi                \\
 10 stalkers                                   & 5 stalkers \& 5 colossi                \\
 10 zealots                                    & 6 stalkers \& 4 colossi                \\
 10 colossi                                    & 7 stalkers \& 3 colossi                \\
 8 stalkers \& 1 zealot \& 1 colossus          & 8 stalkers \& 2 colossi                \\
 1 stalker \& 8 zealots \& 1 colossus          & 9 stalkers \& 1 colossus               \\
 2 stalkers \& 7 zealots \& 1 colossus         & \textbf{Testing}                        \\
 6 stalkers \& 2 zealots \& 2 colossi          & 10 stalkers                             \\
 5 stalkers \& 3 zealots \& 2 colossi          & 10 zealots                              \\
 3 stalkers \& 5 zealots \& 2 colossi          & 10 colossi                              \\
 4 stalkers \& 4 zealots \& 2 colossi          & 8 stalkers \& 1 zealot \& 1 colossus  \\
 4 stalkers \& 3 zealots \& 3 colossi          & 1 stalker \& 8 zealots \& 1 colossus  \\
\textbf{Testing}                               & 7 stalkers \& 2 zealots \& 1 colossus \\
2 stalkers \& 8 zealots                        & 2 stalkers \& 7 zealots \& 1 colossus \\
7 stalkers \& 3 zealots                        & 6 stalkers \& 2 zealots \& 2 colossi  \\
6 zealots \& 4 colossi                         & 2 stalkers \& 6 zealots \& 2 colossi  \\
6 stalkers \& 4 colossi                        & 5 stalkers \& 3 zealots \& 2 colossi  \\
9 stalkers \& 1 colossus                       & 3 stalkers \& 5 zealots \& 2 colossi  \\
7 stalkers \& 2 zealots \& 1 colossus          & 4 stalkers \& 4 zealots \& 2 colossi  \\
3 stalkers \& 4 zealots \& 3 colossi           & 4 stalkers \& 3 zealots \& 3 colossi  \\
2 stalkers \& 6 zealots \& 2 colossi           & 3 stalkers \& 4 zealots \& 3 colossi  \\
        \bottomrule
    \end{tabular}
\end{table}

\subsection{Architecture, Training and Evaluation}\label{app:architecture}

The evaluation procedure is similar to the one in \citep{rashid2020monotonic}. The training is paused after every 30k timesteps during which 16 test episodes are run with agents performing action selection greedily in a decentralised fashion. The percentage of episodes where the agents defeat all enemy units within the permitted time limit is referred to as the
test win rate.

To speed up the learning, the agent networks are parameters are shared across all agents. 
A one-hot encoding of the \texttt{agent\_id} is concatenated onto each agent's observations. 
All neural networks are trained using RMSprop without weight decay or momentum.

\subsubsection*{Value-based baselines}

The architecture of all agent networks is a DRQN \cite{hausknecht_deep_2015} with a recurrent layer 
comprised of a GRU with a 64-dimensional hidden state, with a fully-connected 
layer before and after. 
We sample batches of 32 episodes uniformly from the replay buffer, and train on fully unrolled episodes, performing a single gradient descent step after 8 episodes.

\begin{table}[H]
    \centering
    \caption{Hyperparameters of QMIX and VDN}\label{tab:hyper_value_based}
    \begin{tabular}{llc}
        \toprule
        Method & Name & Value \\
        \midrule
        QMIX \& VDN & learning rate & $5 \times 10^{-4}$\\
        & RMSprop $\alpha$ & 0.99 \\
        & replay buffer size & 5000 episodes \\
        & target network update interval & 200 episodes \\
        & $\gamma$ & 0.99 \\
        & double DQN target & True \\
        & initial $\epsilon$ & 1 \\
        & final $\epsilon$ & 0.05 \\
        & $\epsilon$ anneal period & 50000 steps \\
        & $\epsilon$ anneal rule & linear \\
        \midrule
        QMIX & mixing network hidden layers & 1 \\
        & mixing network hidden layer units & 32 \\
        & mixing network non-linearity & ELU \\
        & hypernetwork hidden layers & 2 \\
        & hypernetwork hidden layer units & 64 \\
        & hypernetwork non-linearity & ReLU \\
        \bottomrule
    \end{tabular}
\end{table}

\subsubsection*{PPO baselines}
We parameterize the actor and critic with two independent recurrent neural networks, 
each of which is  comprised of a GRU with a 64-dimensional hidden state, with a fully-connected
layer as the input and output. 

\begin{table}[H]
    \centering
    \caption{Hyperparameters of IPPO and MAPPO}\label{tab:hyper_value_based}
    \begin{tabular}{llc}
        \toprule
        Method & Name & Value \\
        \midrule
        IPPO \& MAPPO & critic learning rate & $0.001$ \\
        & actor learning rate & 0.99 \\
        & $\gamma$ & 0.99 \\
        & $\lambda$ & 0.95 \\
        & $\epsilon$ & 0.2 \\
        & clip range & 0.1 \\
        & normalize advantage & True \\
        & normalize inputs & True \\
        & grad norm & 0.5 \\
        & number of actors & 8 \\
        & critic coefficient & 2 \\
        & entropy coefficient & 0 \\
        & mini epochs for actor update & 10 \\
        & mini epochs for critic update & 10 \\
        & mini batch size & 64 \\
        \bottomrule
    \end{tabular}
\end{table}
	
\section{Full StarCraft II Results}\label{app:results}
Complete results for StarCraft II are as shown in \cref{fig:smac1},\cref{fig:smac2}, \cref{fig:smac3}.
\begin{figure*}[htb!]
    \centering
    \includegraphics[width=.43\linewidth]{figures/plots/sc2_baseline_legend.png}
    \includegraphics[width=\linewidth]{figures/plots/10gen_win_rate.pdf}
    \includegraphics[width=\linewidth]{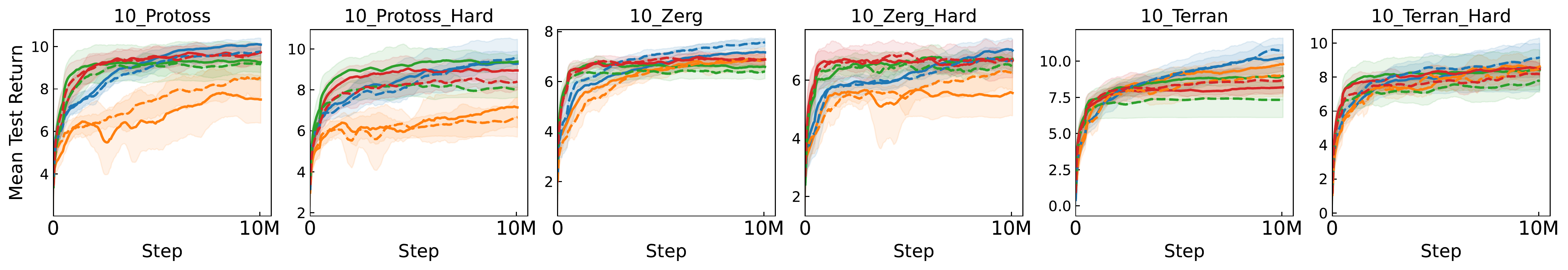}
    \includegraphics[width=\linewidth]{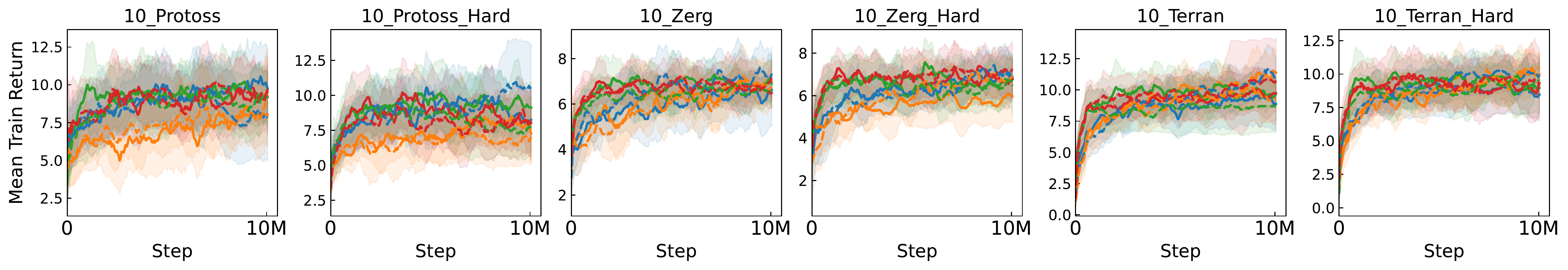}
    \includegraphics[width=\linewidth]{figures/plots/10gen_gap.pdf}
    \caption{Experimental results on SMAC unit swapping tasks. Dashed lines indicate the inclusion of information on capabilities as part of the agent observations. Standard deviation is shaded.}
    \label{fig:smac1}
\end{figure*}

\begin{figure*}[htb!]
    \centering
    \includegraphics[width=.43\linewidth]{figures/plots/sc2_baseline_legend.png}
    \includegraphics[width=\linewidth]{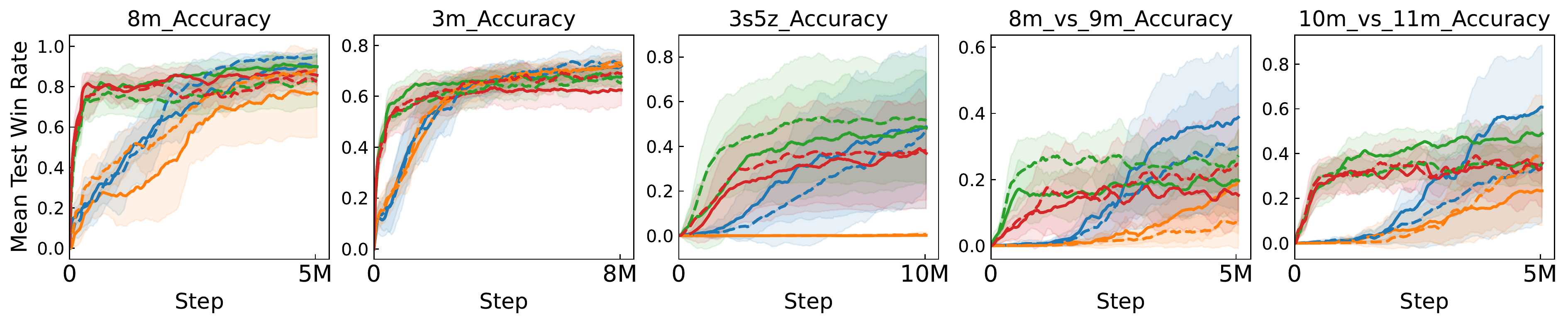}
    \includegraphics[width=\linewidth]{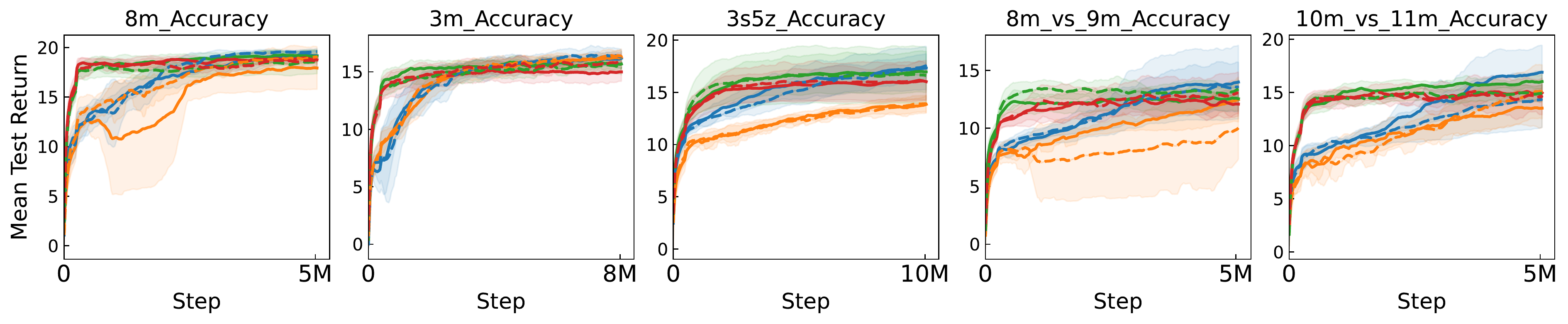}
    \includegraphics[width=\linewidth]{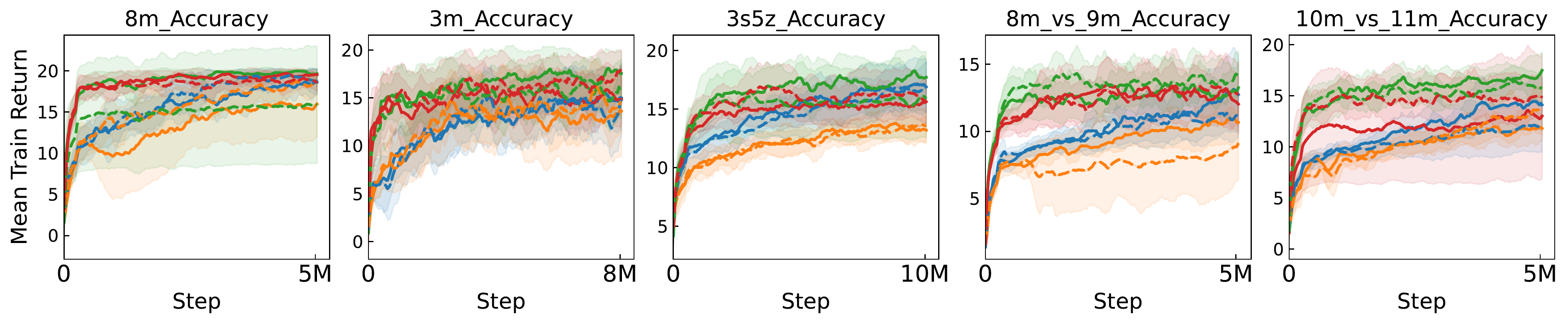}
    \includegraphics[width=\linewidth]{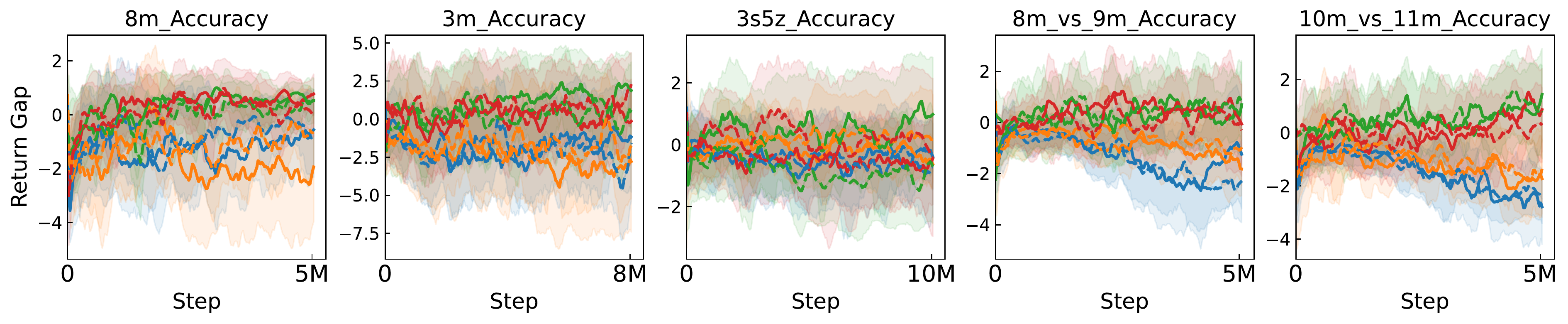}
    \caption{Experimental results on SMAC unit accuracy tasks. Dashed lines indicate the inclusion of information on capabilities as part of the agent observations. Standard deviation is shaded.}
    \label{fig:smac2}
\end{figure*}

\begin{figure*}[htb!]
    \centering
    \includegraphics[width=.43\linewidth]{figures/plots/sc2_baseline_legend.png}
    \includegraphics[width=\linewidth]{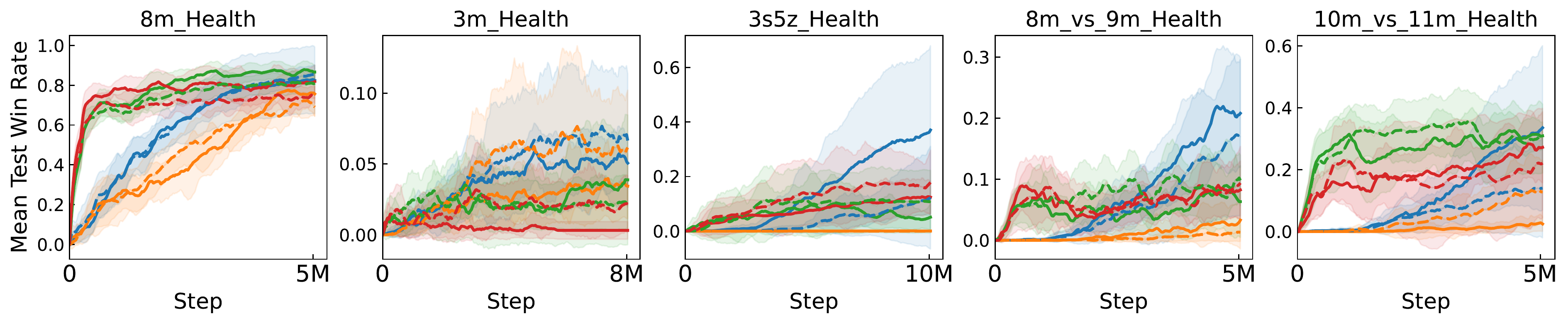}
    \includegraphics[width=\linewidth]{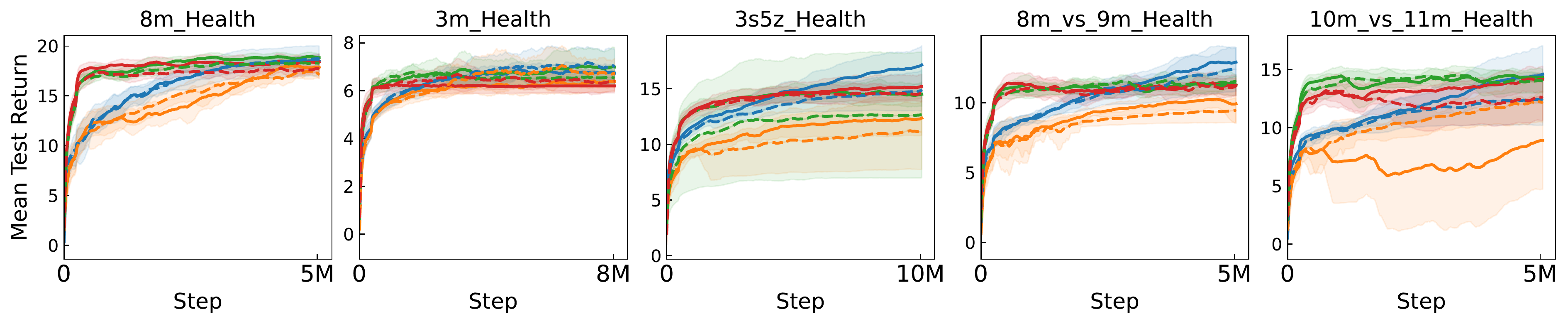}
    \includegraphics[width=\linewidth]{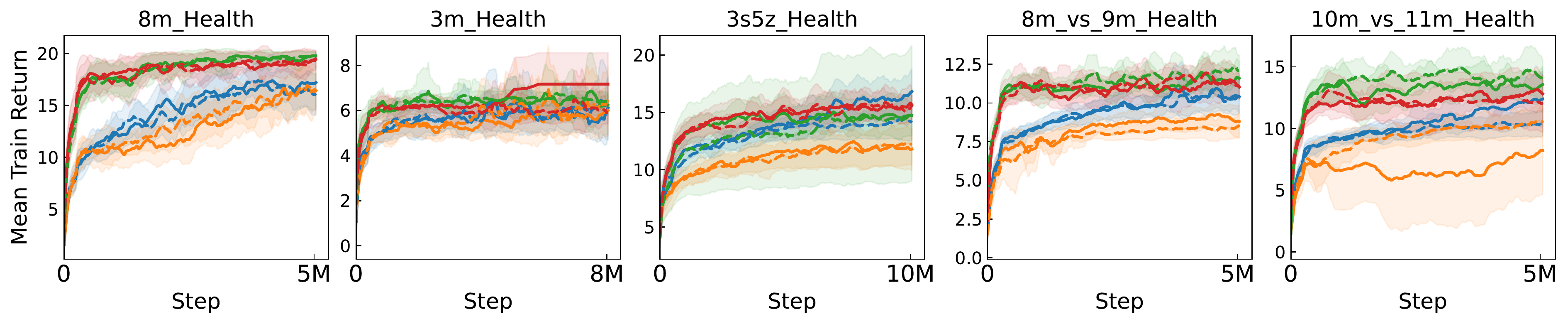}
    \includegraphics[width=\linewidth]{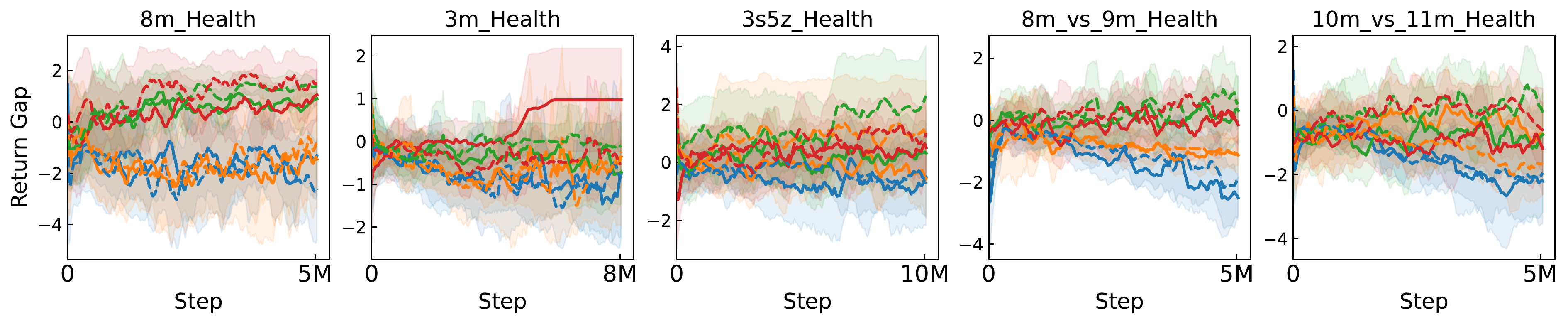}
    \caption{Experimental results on SMAC unit health tasks. Dashed lines indicate the inclusion of information on capabilities as part of the agent observations. Standard deviation is shaded.}
    \label{fig:smac3}
\end{figure*}	
\typeout{get arXiv to do 4 passes: Label(s) may have changed. Rerun}
\end{document}